\documentclass{article} 
\usepackage{iclr,times}

\usepackage{amssymb}
\usepackage{amsmath,amsfonts,bm}

\mathchardef\mhyphen="2D

\newcommand\normbig[1]{\big\lVert#1\big\rVert}

\newcommand\normf[1]{\left\lVert#1\right\rVert_F}
\newcommand\normfbig[1]{\big\lVert#1\big\rVert_F}





\def\1{\bm{1}}

\newcommand{\real}{\mathbb{R}}






\definecolor{titlepagecolor}{cmyk}{75,68,67,90}
\definecolor{titlepagecolor2}{rgb}{1.0, 0.08, 0.58}
\definecolor{emerald}{rgb}{0.31, 0.78, 0.47}
\definecolor{deeppink}{HTML}{D14064}
\definecolor{lowpink}{HTML}{ffe6ec}
 
\definecolor{lowblue}{HTML}{E1EBFE}

\let\oldforall\forall
\renewcommand{\forall}{\oldforall\, }

\usepackage{color}   


\definecolor{mylightbluetitle}{RGB}{60,113,183}
\definecolor{structurecolorblue}{RGB}{60,113,183}
\definecolor{structurecolorgreen}{RGB}{63,145,182}
\colorlet{structurecolor}{structurecolorblue}
\definecolor{structurecolorelegant}{RGB}{60,113,183}
\definecolor{structurecolorlt}{RGB}{31,119,185}

\definecolor{structurecolorHighTheoremBlue}{RGB}{220,227,248}
\definecolor{structurecolorHighTheoremGreen}{RGB}{188,222,231}
\colorlet{structurecolorHighTheorem}{structurecolorHighTheoremBlue}


\definecolor{winestain}{rgb}{0.5,0,0}
\definecolor{mydarkblue}{rgb}{0,0.08,0.45}
\definecolor{mydarkred}{rgb}{0.70,0.00,0.00}
\definecolor{mydarkgreen}{rgb}{0.00,0.30,0.00}
\definecolor{mydarkyellow}{RGB}{197,151,13}
\definecolor{mydarkpurple}{RGB}{149,18,192}
\definecolor{mydarkgray}{RGB}{64,64,64}

\definecolor{color0}  {RGB}{174,225,254} 
\definecolor{color1}  {RGB}{220,227,248} 
\definecolor{color2}  {RGB}{28,130,185} 
\definecolor{color3}  {RGB}{255,253,250} 
\definecolor{colormiddleright}  {RGB}{245,253,250} 
\definecolor{colorbottomleft}  {RGB}{255,243,250} 
\definecolor{coloruppermiddle}  {RGB}{255,253,230} 
\definecolor{colormiddleleft}  {RGB}{255,244,237}
\definecolor{colorcr}  {RGB}{249,253,232} 
\definecolor{colorreduction}  {RGB}{255,235,254} 
\definecolor{colorqr}  {RGB}{254,221,199} 
\definecolor{colorbiconjugate}  {RGB}{251,149,161} 
\definecolor{colorsvd}  {RGB}{215,247,235} 
\definecolor{colorupperright}  {RGB}{239,246,251} 
\definecolor{colorspectral}  {RGB}{206,226,243} 
\definecolor{colorbottomright}  {RGB}{220,224,236} 
\definecolor{coloreigenvalue}  {RGB}{197,203,224} 
\definecolor{colorcp} {RGB}{217, 234, 186} 
\definecolor{colorcpborder} {RGB}{233, 243, 216} 
\definecolor{colorupperleft}  {RGB}{235,243,240} 
\definecolor{colorsemidefinite}  {RGB}{217,232,226} 
\definecolor{colormiddle} {RGB}{235, 240,255}
\definecolor{colorlu}  {RGB}{220,227,255} 
\definecolor{colorals}  {RGB}{240,230,255} 
\definecolor{coloralsbkg}  {RGB}{248,243,255} 
\definecolor{canaryyellow}{rgb}{1.0, 0.75, 0.0}
\definecolor{bluepigment}{rgb}{0.0, 0.0, 1.0}
\definecolor{canarypurple}{RGB}{208, 13, 241}
\definecolor{colorGreenOcre}{RGB}{51,102,0} 
\definecolor{colorBlue2}{RGB}{200,207,248}
\definecolor{shadecolor}{gray}{0.75}












\newcommand{\rank}{\mathrm{rank}}

\newcommand{\trace}{\mathrm{tr}}




\newcommand{\bLambda}{\boldsymbol\Lambda}



\newcommand{\widetildebA}{\widetilde{\bm{A}}}

\newcommand{\widetildebW}{\widetilde{\bm{W}}}
\newcommand{\widetildebX}{\widetilde{\bm{X}}}

\newcommand{\widetildebZ}{\widetilde{\bm{Z}}}


\newcommand{\bA}{\bm{A}}

\newcommand{\bB}{\bm{B}}

\newcommand{\bI}{\bm{I}}

\newcommand{\bL}{\bm{L}}

\newcommand{\bO}{\bm{O}}

\newcommand{\bP}{\bm{P}}

\newcommand{\bQ}{\bm{Q}}

\newcommand{\bS}{\bm{S}}

\newcommand{\bu}{\bm{u}}

\newcommand{\bv}{\bm{v}}

\newcommand{\bx}{\bm{x}}
\newcommand{\bX}{\bm{X}}






\DeclareMathAlphabet{\mathsfit}{\encodingdefault}{\sfdefault}{m}{sl}
\SetMathAlphabet{\mathsfit}{bold}{\encodingdefault}{\sfdefault}{bx}{n}

\usepackage{flushend} 
\usepackage{balance}
 
\usepackage{array,multirow,graphicx}
\usepackage{changes}
\usepackage{booktabs}
\usepackage{graphicx} 
\usepackage{xparse}
\usepackage{hyperref}
\usepackage{url}
\usepackage{subfigure} 
\usepackage{cancel}

\usepackage{sidecap}
\sidecaptionvpos{figure}{t}
\usepackage{verbatimbox}
\usepackage[labelfont=bf]{caption} 
\usepackage{wrapfig}

\usepackage{algorithm}
\usepackage{algpseudocode}
\usepackage{graphics}
\usepackage{epsfig}
\usepackage{graphicx}
\usepackage{color}   
\definecolor{winestain}{rgb}{0.5,0,0}
\definecolor{ocre}{RGB}{51,102,0} 
\definecolor{colorBlue2}{RGB}{200,207,248}
\definecolor{mydarkblue}{rgb}{0,0.08,0.45}
\definecolor{mylightbluetext}{rgb}{0,0.08,0.45}
\usepackage{hyperref}
\hypersetup{
	colorlinks=true, 
	linktoc=all,     
	linkcolor=mydarkblue,  
	anchorcolor=blue,
	citecolor=mydarkblue,
}
\PassOptionsToPackage{numbers, compress}{natbib}
\usepackage[numbers]{natbib}


\makeatletter
\renewcommand\section{\@startsection {section}{1}{\z@}%
	{-0.1\baselineskip}
	{0.1\baselineskip}
	{\normalfont\Large\bfseries}}
\makeatother

\makeatletter
\newcommand{\paragrapharrow}{%
	\@startsection{paragraph}{4}{\z@}%
	{3.25ex \@plus1ex \@minus.2ex}%
	{-1em}%
	{\normalfont\normalsize\bfseries$\blacktriangleright$\ }}
\makeatother

\title{Large Language Model Compression via the\\ Nested Activation-Aware Decomposition}

\author{
Jun Lu\thanks{Correspondence to: Jun Lu $<$jun.lu.locky@gmail.com$>$.}, \,\,
Tianyi Xu, \;Bill Ding, \;David Li, \; Yu Kang\thanks{BA Inc.} 
\\
}

\newtheorem{theorem}{Theorem}

\newtheorem{proof}{Proof}
\newcommand{\BlackBox}{\rule{1.5ex}{1.5ex}} 
\renewenvironment{proof}{\par\noindent{\bf Proof\ }}{\hfill\BlackBox\\[2mm]}

\begin{document}

\maketitle

\begin{abstract}
In this paper, we tackle  the critical challenge of compressing large language models (LLMs) to facilitate their practical deployment and broader  adoption. We introduce a novel post-training compression paradigm that focuses on low-rank decomposition of LLM weights. 
Our analysis identifies two main challenges in this task: the variability in LLM activation distributions and handling unseen activations from different datasets and models.

To address these challenges, we propose a nested activation-aware framework (NSVD) for LLMs, a training-free approach designed to enhance the accuracy of low-rank decompositions by managing activation outliers through transforming the weight matrix based on activation distribution and the original weight matrix. This method allows for the absorption of outliers into the transformed weight matrix, improving decomposition accuracy. 
Our comprehensive evaluation across \textcolor{black}{eight} datasets and \textcolor{black}{six} models from three distinct LLM families  demonstrates the superiority of NSVD over current state-of-the-art methods, especially at medium to large compression ratios or in multilingual and multitask settings.

\end{abstract}

\section{Introduction}
By training on vast amounts of textual data, large language models (LLMs) have demonstrated exceptional capabilities across a wide range of tasks. These models can learn rich linguistic structures and contextual information, performing excellently in various applications such as natural language understanding, text generation, question-answering systems, machine translation, sentiment analysis, and more \citep{brown2020language, thirunavukarasu2023large, achiam2023gpt, merchant2023scaling, lu2024improving}. As technology advances, the scale of LLMs continues to grow, leading to performance improvements. However, this also raises the requirements for computational resources and storage. Consequently, an important area of current research focuses on effectively compressing these models to reduce their resource consumption while maintaining high performance. This involves studying and applying various compression methods, such as neural network and transformer pruning \citep{frantar2023sparsegpt, mirzadeh2023relu}, knowledge distillation \citep{tunstall2023zephyr},   weight quantization \citep{dettmers2022gpt3, frantar2022gptq, lin2024awq}, and others, aiming to find the optimal balance between efficiency and effectiveness.

Among these techniques, low-rank matrix decomposition remains relatively unexplored but holds great promise \citep{hsu2022language, yuan2023asvd, wang2024svd}. This technique involves approximating the weight matrices in neural networks or transformer structures with lower rank matrices, thereby reducing the overall model size. Given the vast number of parameters in LLMs, low-rank decomposition can significantly decrease memory usage. Moreover, it can enhance the efficiency of already compressed models by further compressing quantized or pruned models.
Despite its potential, singular value decomposition (SVD) or other matrix decomposition for LLM compression has not been fully realized. Although some SVD-based methods like FWSVD, ASVD, and SVD-LLM have been proposed \citep{hsu2022language, yuan2023asvd, wang2024svd}, they still suffer from  performance degradation at medium to high compression ratios or datasets with significantly  different activations (i.e., ``overfitting" happens during the post-training procedure). This limitation stems from two main issues: imprecise data preprocessing, where existing strategies fail to strike a balance between the activation-aware compression loss and the original matrix compression loss;
and the unawareness of different activations for different model architectures, different datasets, or different tasks.

This paper introduces the \textit{nested activation-aware framework (NSVD)}, a  nested SVD-based compression method for LLMs that addresses the aforementioned issues. NSVD features two key components: truncation-aware data whitening, which ensures a direct correlation between singular values and compression loss, allowing for minimal loss when truncating singular values; and a two-way decomposition for adhering to the original weight matrix.

We evaluate NSVD against three other SVD-based LLM compression methods---standard SVD,  and two other ASVD approaches (ASVD-0 and ASVD-I; which will be introduced in the sequel)---across \textcolor{black}{eight} datasets and \textcolor{black}{six} models from three LLM families (LLaMA, OPT, and Mistral) at different scales. Our findings highlight that NSVD consistently outperforms these methods across all tested scenarios, especially at medium to large compression ratios (30\% to 50\%) or in multilingual and multitask settings.

\section{Related Work}\label{section:rel_work}

\paragrapharrow{Low-rank approximation.}
In the context of low-rank matrix approximation, two types of problems emerge due to the interaction between rank and error: the  \textit{fixed-precision approximation problem} and the \textit{fixed-rank approximation problem}. In the fixed-precision approximation problem, given a matrix $\bA$ and a  tolerance $\epsilon$, the goal is to find a matrix $\widetildebA$ with rank $r = r(\epsilon)$ such that $\normbig{\bA-\widetildebA} \leq \epsilon$ in an appropriate matrix norm. 
Conversely, in the fixed-rank approximation problem, one seeks  a matrix $\widetildebA$ with a fixed rank $k$ that minimizes the error $\normbig{\bA-\widetildebA}$ \citep{kishore2017literature, martinsson2019randomized, lu2022matrix}. This paper focuses on the latter. 

\paragrapharrow{Pruning and quantization.}

The compression of LLMs has become an essential area of research aimed at mitigating their significant computational and memory demands  \citep{yuan2023asvd, wang2024svd}. Driven by this need, various strategies have emerged, broadly categorized into weight quantization \citep{dettmers2022gpt3, frantar2022gptq, lin2024awq}, network and transformer pruning \citep{frantar2023sparsegpt, mirzadeh2023relu}, knowledge distillation \citep{tunstall2023zephyr}, and low-rank approximation \citep{hsu2022language, yuan2023asvd, wang2024svd}. Among these, post-training methods that avoid resource-intensive retraining processes are particularly noteworthy, including unstructured pruning, structured pruning, quantization, and low-rank approximation.

Unstructured pruning sets individual weights to zero without changing the model's shape, exemplified by SparseGPT \citep{frantar2023sparsegpt}, which prunes less important weight elements using the inversion of the Hessian matrix. However, the irregular sparsification achieved through unstructured pruning often fails to achieve the desired speedup or memory savings. Structured pruning, in contrast, removes entire channels or components from LLMs, making it easier to implement on hardware but potentially leading to significant accuracy degradation, especially under high compression ratios as seen with LLM-Pruner \citep{ma2023llm}.

Quantization techniques reduce the precision of weight matrices, offering limited compression options typically ranging from 3 to 8 bits, which may not fully utilize available memory budgets. GPTQ is a notable example that uses layer-wise quantization and updates weights with inverse Hessian information \citep{frantar2022gptq}. 

\paragrapharrow{Low-rank approximation for LLMs.}
Despite the popularity of low-rank factorization as a neural network compression technique, its application in LLMs remains relatively underexplored.
This gap is addressed through the introduction of novel low-rank decomposition methods designed specifically for LLMs, such as ASVD \citep{yuan2023asvd}.
To be more specific, standard SVD focuses on compressing original weight matrices without considering parameter importance, which can result in larger compression errors. To address this, FWSVD incorporates Fisher information to weigh parameter importance,  ASVD scales the weight matrix by a diagonal matrix representing input channel impact to account for activation distribution effects, while SVD-LLM further scales the weight matrix by a Cholesky decomposition of the activation matrix to upper bound the reconstruction error \citep{hsu2022language, yuan2023asvd, wang2024svd}.
However, all of these methods face challenges in balancing compression efficiency and accuracy, especially under medium to high compression scenarios for datasets with different languages or  tasks.

Despite advancements such as the introduction of rank-adaptive methods like FWSVD, ASVD, and SVD-LLM, which aim to optimize compression efficiency and accuracy by considering parameter importance and activation distribution effects, significant challenges remain. These methods struggle with severe accuracy degradation under medium to high compression ratios due to  unawareness of the discrepancy between different datasets.
In summary, while substantial progress has been made in compressing LLMs, there remains a need for more efficient and accessible deployment methods that balance compression effectiveness with minimal loss in model accuracy.

\section{Method}
In this section, we  outline the formulation of the proposed nested decomposition approach  using matrix decomposition. 
We begin by discussing the theory of low-rank approximation and the  source of activation-aware low-rank approximation,  followed by introducing our proposed method of nested decomposition aimed at achieving low-rank approximations for LLMs.

To approximate a matrix $\bA\in \real^{m\times n}$ of rank $r$ with a lower rank-$k$ matrix $\widetildebA$ ($k<r$), the approximation can be evaluated using the Frobenius norm:
\begin{equation}
	\widetildebA = \mathop{\arg\min}_{\rank(\bB)=k} \ \ \normf{\bA - \bB},
\end{equation}
where $\rank(\cdot)$ denotes the rank of a matrix.
The \text{Frobenius norm} of a matrix $\bA\in \real^{m\times n}$ is defined as 
\begin{equation}\label{definition:frobernius-in-svd}
\normf{\bA} = \sqrt{\sum_{i=1,j=1}^{m,n} (a_{ij})^2}=\sqrt{\trace(\bA\bA^\top)}=\sqrt{\trace(\bA^\top\bA)} = \sqrt{\sigma_1^2+\sigma_2^2+\ldots+\sigma_r^2}, 
\end{equation}
where $\sigma_1, \sigma_2, \ldots, \sigma_r$ are the nonzero singular values of $\bA$, and $\trace(\cdot)$ denotes the trace of the underlying matrix.
Then, we can recover the optimal rank-$k$ approximation through  the following theorem.
\begin{theorem}[Eckart-Young-Mirsky Theorem]\label{theorem:young-theorem-spectral}
Given a matrix $\bA\in \real^{m\times n}$, $1\leq k\leq \rank(\bA)=r$, and let $\bA_k$ be the truncated SVD (TSVD) of $\bA$ retaining the largest $k$ singular terms, i.e., $\bA_k = \sum_{i=1}^{k} \sigma_i\bu_i\bv_i^\top$ from the  SVD of $\bA=\sum_{i=1}^{r} \sigma_i\bu_i\bv_i^\top$ by zeroing out the $r-k$ trailing singular values of $\bA$. Then, $\bA_k$ is the optimal rank-$k$ approximation to $\bA$ in terms of the Frobenius norm. {Note that $\bA_k$ can be stored using $(m+n)k + k$ entries (or $(m+n)k$ entries if we absorb the singular values into the other two decomposed matrices), significantly reducing storage requirements compared to $mn$ entries required for the full matrix.} 
\end{theorem}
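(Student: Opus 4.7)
The plan is to prove the theorem in two parts: first verify that the truncated SVD $\bA_k$ attains the claimed error value, and then show no matrix of rank at most $k$ can achieve a strictly smaller Frobenius error. For the upper bound I would expand $\bA - \bA_k = \sum_{i=k+1}^{r} \sigma_i \bu_i \bv_i^\top$ and exploit the mutual orthonormality of $\{\bu_i\}$ and $\{\bv_i\}$, so that $\trace\big((\bA-\bA_k)^\top(\bA-\bA_k)\big)$ collapses (all cross terms vanish) into the diagonal sum $\sum_{i=k+1}^{r} \sigma_i^2$. This is just the tail of the Frobenius identity already recorded in \eqref{definition:frobernius-in-svd}.

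For the lower bound, the approach I would take is the Weyl-type inequality for singular values, $\sigma_{i+j-1}(\bM+\bN) \leq \sigma_i(\bM) + \sigma_j(\bN)$, applied with $\bM = \bA - \bB$ and $\bN = \bB$ for an arbitrary $\bB$ with $\rank(\bB) \leq k$. Since rank $\leq k$ forces $\sigma_{k+1}(\bB) = 0$, taking $j = k+1$ yields $\sigma_{i+k}(\bA) \leq \sigma_i(\bA - \bB)$ for every $i \geq 1$. Squaring and summing over $i = 1, \ldots, r-k$ then gives
\begin{equation*}
\normf{\bA - \bB}^2 \;\geq\; \sum_{i=1}^{r-k} \sigma_i^2(\bA - \bB) \;\geq\; \sum_{i=1}^{r-k} \sigma_{i+k}^2(\bA) \;=\; \sum_{i=k+1}^{r} \sigma_i^2 \;=\; \normf{\bA - \bA_k}^2,
\end{equation*}
which combined with the previous paragraph proves optimality.

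The main obstacle I anticipate is justifying the Weyl inequality itself, since it is not proved in the excerpt. My plan is to derive it through the Courant-Fischer min-max characterization $\sigma_\ell(\bM) = \min_{\dim \mathcalV = n-\ell+1} \max_{\bx \in \mathcalV,\,\normtwo{\bx}=1} \normtwo{\bM\bx}$: choose optimal codimension-$(i-1)$ and codimension-$(j-1)$ subspaces $\mathcalV_\bM, \mathcalV_\bN$ witnessing $\sigma_i(\bM)$ and $\sigma_j(\bN)$ respectively, use the dimension count $\dim(\mathcalV_\bM \cap \mathcalV_\bN) \geq n - (i+j-1) + 1$, and apply the pointwise triangle inequality $\normtwo{(\bM+\bN)\bx} \leq \normtwo{\bM\bx} + \normtwo{\bN\bx}$ on that intersection. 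An alternative route that sidesteps Weyl is to parametrize any rank-$k$ candidate as $\bB = \bX \bY^\top$, decompose $\bA$ along the orthogonal complement of the range of $\bY$, and invoke the Pythagorean identity together with the column-wise optimality of orthogonal projection; either route closes the lower bound cleanly.

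Finally, the storage claim is an immediate bookkeeping remark: writing $\bA_k = \bU_k \bSigma_k \bV_k^\top$ with $\bU_k \in \real^{m \times k}$, diagonal $\bSigma_k \in \real^{k\times k}$, and $\bV_k \in \real^{n \times k}$ uses $mk + k + nk = (m+n)k + k$ entries, and absorbing $\bSigma_k$ into either orthogonal factor eliminates the trailing $+k$. I would record this as a short concluding sentence after the optimality argument, since it involves no optimization and only parameter counting.
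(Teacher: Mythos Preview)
Your proposal is correct and follows the standard Weyl-inequality route to Eckart--Young--Mirsky; the upper bound via orthonormality and the lower bound via $\sigma_{i+k}(\bA)\leq\sigma_i(\bA-\bB)$ are both sound, and your sketch of deriving Weyl from Courant--Fischer is the usual argument. However, there is nothing to compare against: the paper states Theorem~\ref{theorem:young-theorem-spectral} without proof, treating it as a classical result and deferring to the cited reference \citep{lu2022matrix}, so your write-up would in fact supply strictly more detail than the paper itself.
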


Moreover, it can also be shown that  $\bA_k$ is  the optimal rank-$k$ approximation to $\bA$ in terms of the spectral norm or any other unitarily invariant matrix norms \citep{lu2022matrix}. The minimal error is given by the Euclidean norm of the discarded singular values: $\normf{\bA-\bA_k} =\sqrt{\sigma_{k+1}^2 +\sigma_{k+2}^2+\ldots +\sigma_{r}^2}$.

\paragrapharrow{Activation-aware low-rank approximation.}
In many applications,   low-rank approximation is applied to weight matrices within transformer structures for LLMs \citep{vaswani2017attention}.
However, due to the high variance in input activations, simply applying standard SVD for such structure compression can lead to significant accuracy degradation.

To address this issue, {activation-aware low-rank approximation} of matrix $\bA$ solves the following optimization problem instead:
\begin{equation}\label{equation:awaretruc_loss}
\widetildebW\widetildebZ\equiv \widetildebA = \mathop{\arg\min}_{\rank(\bB)=k}\;\normf{(\bA -\bB)\bX},
\end{equation}
where $\bX\in\real^{n\times p}$ represents the input activation matrix, and $\bA,\bB\in\real^{m\times n}$ ($\bA$ is typically referred to as a weight matrix in neural network or transformer structures).

Specifically, the activation-aware low-rank approximation approach extracts a  matrix $ \bS $ from $ \bX $; for example, $\bS$ could be a diagonal matrix, with each element in the diagonal  chosen as the absolute mean value of each dimension (i.e., $s_{ii} $ is the absolute mean value of the corresponding dimension $\bx_i$ for each $i$). 
This matrix $ \bS $ is then used to normalize $ \bX $, transforming $ \bA $ into $ (\bA \bS)(\bS^{-1} \bX) $. Subsequently, SVD is performed on $ \bA \bS $ to obtain the decomposed matrices:
\begin{equation}\label{equation:asvd0}
\text{(ASVD-0):}\qquad \widetildebA_k = \mathop{\arg\min}_{\rank(\bB)=k}\;\normf{\bA \bS - \bB}.
\end{equation}
This compression approach for LLMs is called \textit{ASVD-0} \citep{yuan2023asvd}.

More formally, the following theorem provides a sub-optimal solution to the activation-aware low-rank approximation problem \eqref{equation:awaretruc_loss}, where the squared loss is upper bounded by the sum of squared singular values of a specific matrix related to the weight matrix and the activation.
\begin{theorem}[ASVD-I: Activation-Aware Low-Rank Approximation by Cholesky]\label{theorem:asvd1}
Given a matrix $\bA\in\real^{m\times n}$,
let  $\bA\bS=\sum_{i=1} \sigma_i\bu_i\bv_i^\top$ be the SVD of $\bA\bS$, where $\bS\bS^\top$ is the Cholesky decomposition of $\bX\bX^\top$, and  $\bX$ has full rank.
Then, 
\begin{enumerate}
\item Truncating  the $j$-th singular value of $\bA\bS$ to obtain $\widetildebA\bS = \sum_{i\neq j} \sigma_i\bu_i\bv_i^\top$, the compression loss $ \ell_j =\normfbig{(\bA -\widetildebA)\bX} $ equals  $ \sigma_j $.
\item  Truncating   the smallest singular values of $\bA\bS$ to obtain $\widetildebA\bS = \sum_{i=1}^k \sigma_i\bu_i\bv_i^\top$, the squared compression loss is $\ell^2= \sum_{i=k+1}^{r} \sigma_i^2$, where $r$ is the rank of $\bA\bS$.
\end{enumerate}
\end{theorem}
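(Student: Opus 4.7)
The plan is to reduce the activation-weighted loss $\normf{(\bA-\widetildebA)\bX}$ to the ordinary Frobenius loss $\normf{(\bA-\widetildebA)\bS}$, after which both parts of the theorem become immediate consequences of the SVD of $\bA\bS$. Concretely, the first and central step will be to exploit the Cholesky identity $\bS\bS^\top = \bX\bX^\top$. For any matrix $\bM\in\real^{m\times n}$, the cyclic property of the trace gives
\begin{equation}
\normf{\bM\bX}^2 \;=\; \trace\!\big(\bM\bX\bX^\top\bM^\top\big) \;=\; \trace\!\big(\bM\bS\bS^\top\bM^\top\big) \;=\; \normf{\bM\bS}^2,
\end{equation}
so the activation-aware loss equals the ordinary Frobenius loss on the transformed weight $\bA\bS$. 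This is the only place where the Cholesky structure (and the full-rank assumption on $\bX$, which ensures that $\bS$ is square and the decomposition $\bA=(\bA\bS)\bS^{-1}$ is well-defined) is actually used.

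Second, I would specialize this identity to $\bM = \bA-\widetildebA$. Writing $\bA\bS = \sum_{i=1}^{r}\sigma_i\bu_i\bv_i^\top$ and $\widetildebA\bS$ as the truncated sum, the residual $(\bA-\widetildebA)\bS$ is itself a partial sum of rank-one SVD terms of $\bA\bS$ corresponding to the discarded indices. For part (1), discarding the single index $j$ leaves $(\bA-\widetildebA)\bS = \sigma_j\bu_j\bv_j^\top$, whose Frobenius norm is $\sigma_j\,\normf{\bu_j}\,\normf{\bv_j}=\sigma_j$ since $\bu_j,\bv_j$ are unit vectors. Combined with the first step, this yields $\ell_j=\sigma_j$.

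Third, for part (2), discarding all trailing singular terms gives $(\bA-\widetildebA)\bS = \sum_{i=k+1}^{r}\sigma_i\bu_i\bv_i^\top$. Expanding the squared Frobenius norm via the $\trace(\bC^\top\bC)$ form in \eqref{definition:frobernius-in-svd} and using the orthonormality of $\{\bu_i\}$ and $\{\bv_i\}$, all cross terms vanish and we are left with
\begin{equation}
\ell^2 \;=\; \normf{(\bA-\widetildebA)\bS}^2 \;=\; \sum_{i=k+1}^{r}\sigma_i^2,
\end{equation}
which is the claim. Alternatively, one can quote the second equality of \eqref{definition:frobernius-in-svd} directly to the residual matrix, since its nonzero singular values are precisely $\sigma_{k+1},\ldots,\sigma_r$.

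I do not expect a substantial obstacle: the proof is essentially a one-line trace manipulation followed by Eckart--Young bookkeeping. The only subtlety to be careful about is ensuring that $\bS$ is well-defined and invertible, which requires the stated full-rank assumption on $\bX$ so that $\bX\bX^\top$ is symmetric positive definite and admits a genuine Cholesky factor; I would state this explicitly at the start of the proof to make the identity $\normf{\bM\bX}=\normf{\bM\bS}$ rigorous.
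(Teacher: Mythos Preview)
Your proposal is correct and follows essentially the same route as the paper: both reduce the activation-weighted loss to the Frobenius norm of truncated SVD terms via the trace identity $\trace(\bM\bX\bX^\top\bM^\top)=\trace(\bM\bS\bS^\top\bM^\top)$, then read off the answer from orthonormality of the singular vectors. The only cosmetic difference is that you package the identity $\normf{\bM\bX}=\normf{\bM\bS}$ as a standalone lemma first, whereas the paper inlines it by writing $(\bA-\widetildebA)=\sigma_j\bu_j\bv_j^\top\bS^{-1}$ and cancelling $\bS^{-1}\bX\bX^\top\bS^{-\top}=\bI$ inside the trace; your packaging is slightly cleaner but mathematically identical.
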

\begin{proof}
\textbf{(1).}
Since the  matrix $ \bS $ is the Cholesky decomposition of $ \bX\bX^\top $, we have $ \bS\bS^\top = \bX\bX^\top $. Therefore,
$$
\small
\begin{aligned}
\ell_j 
&=\normfbig{(\bA -\widetildebA)\bX}
=
\normfbig{\sigma_j \bu_j \bv_j ^\top\bS^{-1}\bX}
=
\sigma_i \trace(\bu_j \bv_j ^\top \bS^{-1}\bX\bX^\top \bS^{-\top}\bv_j \bu_j ^\top)^{1/2}
= \sigma_j .
\end{aligned}
$$
Thus, the compression loss $ \ell_j $ from truncating $ \sigma_j  $ equals the singular value $ \sigma_j  $ itself.
\paragraph{(2).}
If we truncate $ \sigma_{k+1}, \sigma_{k+2},  \ldots, \sigma_r $, where $r$ is the rank of $\bA\bS$, the square of the loss $ \ell $ is:
\begin{align*}
\ell^2 &=\normf{\sum_{i=k+1}^{r} \sigma_i \bu_i \bv_i^\top \bS^{-1} \bX}^2 = \sum_{j=k+1}^{r} \sum_{i=k+1}^{r} \sigma_i \sigma_j \trace(\bu_i \bv_i^\top \bS^{-1} \bX\bX^\top \bS^{-\top} \bv_j \bu_j^\top) \\
&= \sum_{i=k+1}^{r} \sigma_i^2 \trace(\bu_i \bv_i^\top \bS^{-1} \bX\bX^\top \bS^{-\top} \bv_i \bu_i^\top) = \sum_{i=k+1}^{r} \ell_i^2 = \sum_{i=k+1}^{r} \sigma_i^2.
\end{align*}
This completes the proof.
\end{proof}

\paragrapharrow{Activation-aware low-rank approximation using SVD.}
Theorem~\ref{theorem:asvd1} demonstrates that truncating the smallest singular values of $\bA\bS$ provides a sub-optimal solution for problem \eqref{equation:awaretruc_loss}. Nevertheless, this sub-optimal solution is not unique.
Further observation indicates  that the Cholesky decomposition of $\bX\bX^\top$ can be replaced by its  SVD (or spectral decomposition with nonnegative eigenvalues since $\bX\bX^\top$ is positive semidefinite).
\begin{theorem}[ASVD-II: Activation-Aware Low-Rank Approximation by SVD]\label{theorem:asvd2}
Given a matrix $\bA\in\real^{m\times n}$,
let  $\bA\bS=\sum_{i=1} \sigma_i\bu_i\bv_i^\top$ be the SVD of $\bA\bS$, where $\bS\bS^\top\triangleq (\bP\bLambda^{1/2}) (\bLambda^{1/2} \bP^\top) =  \bP\bLambda\bP^\top$ is the SVD of $\bX\bX^\top$,   and  $\bX$ has full rank.
Then, 
\begin{itemize}
\item[(i)] The results (1) and (2) from Theorem~\ref{theorem:asvd1} also hold.
\item[(ii)] The results of activation-aware low-rank approximation using Cholesky decomposition in Theorem~\ref{theorem:asvd1}  and using SVD are equivalent.
\end{itemize}
\end{theorem}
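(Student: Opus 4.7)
The plan is to observe that the proof of Theorem \ref{theorem:asvd1} uses essentially no property of $\bS$ beyond the identity $\bS\bS^\top = \bX\bX^\top$ and the invertibility of $\bS$. So for part (i), I would first verify that the SVD-based choice $\bS = \bP\bLambda^{1/2}$ indeed satisfies this identity: $\bS\bS^\top = \bP\bLambda^{1/2}\bLambda^{1/2}\bP^\top = \bP\bLambda\bP^\top = \bX\bX^\top$, and that $\bS$ is invertible whenever $\bX$ has full row rank (so $\bLambda \succ 0$). Then the exact same trace manipulations in Theorem \ref{theorem:asvd1} go through verbatim, yielding $\ell_j = \sigma_j$ for truncating a single singular value and $\ell^2 = \sum_{i=k+1}^{r}\sigma_i^2$ for the top-$k$ truncation. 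No new computation is needed — I would simply remark that the earlier argument is substitution-invariant in $\bS$.

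For part (ii), the key step is to bridge the two choices of $\bS$ by an orthogonal factor. Writing $\bS_C$ for the Cholesky factor and $\bS_S = \bP\bLambda^{1/2}$ for the SVD factor, both are square and invertible, and both satisfy $\bS\bS^\top = \bX\bX^\top$. Setting $\bQ \triangleq \bS_C^{-1}\bS_S$, a direct check gives $\bQ\bQ^\top = \bS_C^{-1}\bS_S\bS_S^\top\bS_C^{-\top} = \bS_C^{-1}(\bS_C\bS_C^\top)\bS_C^{-\top} = \bI$, so $\bQ$ is orthogonal and $\bS_S = \bS_C\bQ$. From the SVD $\bA\bS_C = \bU\bSigma\bV^\top$, it follows immediately that $\bA\bS_S = \bU\bSigma(\bQ^\top\bV)^\top$, so the two matrices share the same singular values and left singular vectors while the right singular vectors differ only by the rotation $\bQ^\top$.

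To finish, I would compute the final approximation to $\bA$ in each case and check they agree. The Cholesky-based approximation is $\widetildebA_C = (\bA\bS_C)_k\bS_C^{-1} = \bU\bSigma_k\bV^\top\bS_C^{-1}$, where $\bSigma_k$ keeps only the top-$k$ singular values. The SVD-based approximation is $\widetildebA_S = (\bA\bS_S)_k\bS_S^{-1} = \bU\bSigma_k\bV^\top\bQ \cdot \bQ^{-1}\bS_C^{-1} = \bU\bSigma_k\bV^\top\bS_C^{-1} = \widetildebA_C$, since the trailing $\bQ$ and $\bQ^{-1}$ cancel exactly. Hence the two constructions yield identical low-rank approximations to $\bA$ and identical compression losses.

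The main obstacle I anticipate is the non-uniqueness of the SVD when $\bA\bS$ has repeated or zero singular values, which makes the orthogonal factor $\bQ$ act ambiguously on degenerate right singular subspaces. However, the rank-$k$ truncation depends only on the projector onto the top-$k$ singular subspace, which is invariant under this ambiguity, so the cancellation $(\bA\bS_S)_k\bS_S^{-1} = (\bA\bS_C)_k\bS_C^{-1}$ still holds. It would be cleanest to state the argument for the generic (distinct singular values) case and note that the degenerate case follows either by the invariance just described or by a standard continuity/perturbation argument.
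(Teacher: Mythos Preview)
Your proposal is correct. For part (i) you make the same observation as the paper: the computation in Theorem~\ref{theorem:asvd1} depends on $\bS$ only through $\bS\bS^\top=\bX\bX^\top$ and invertibility, so it transfers verbatim. For part (ii) both you and the paper exhibit an orthogonal matrix linking the two square roots of $\bX\bX^\top$, but the routes differ slightly. The paper invokes the LQ decomposition $\bP\bLambda^{1/2}=\bL\bQ$ and then appeals to the uniqueness of the Cholesky factorization to identify $\bL$ with the Cholesky factor $\bS_C$. You instead define $\bQ=\bS_C^{-1}\bS_S$ directly and verify orthogonality by a one-line computation, avoiding both the LQ existence statement and the Cholesky uniqueness theorem. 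Your argument is therefore a bit more self-contained, and you also spell out explicitly why the resulting rank-$k$ approximations $\widetildebA_C$ and $\widetildebA_S$ coincide (the orthogonal factor cancels), a step the paper leaves implicit. Your remark about repeated singular values and the invariance of the top-$k$ projector is a welcome addition that neither proof in the paper addresses.
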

\begin{proof}
The first part follows similarly from Theorem~\ref{theorem:asvd1}. For \textbf{(ii)},
the matrix $\bP\bLambda^{1/2}$ admits the LQ decomposition $\bP\bLambda^{1/2} = \bL\bQ$~\footnote{The proof of existence can be found in \citet{lu2022matrix}.} such that $\bS\bS^\top = \bL\bL^\top$, where $\bS$ and $\bL$ are both lower triangular, and $\bQ$ is orthogonal.
This proves the equivalence by the uniqueness of the Cholesky decomposition \citep{lu2022matrix}.
\end{proof}

Theorem~\ref{theorem:asvd2} establishes the equivalence between the Cholesky decomposition and SVD for activation-aware low-rank approximation when the activation matrix $\bX$ has full rank. 
However, in practice, computing the eigenvalues of $\bX\bX^\top$ becomes necessary to adjust for positive semidefiniteness if any eigenvalues are zero when using the Cholesky decomposition. 
This: 1) increases the computational complexity because it requires both the Cholesky decomposition and the computation of eigenvalues;
2) introduces biases when some eigenvalues are zero.
In contrast, the method via SVD does not require adjustments for zero eigenvalues since pseudo-inverses can be applied when this happens and avoids additional computations. Moreover, the SVD approach offers an intuitive interpretation: $(\bA\bS)^\top=\bLambda^{1/2}\bP^\top\bA^\top$ initially rotates or reflects the weight matrix using the orthogonal matrix $\bP^\top$, and then scales each dimension by the square roots of the singular values (or the eigenvalues since $\bX\bX^\top$ is symmetric); this facilitates more nuanced feature selection and analysis in future work (see Theorem~\ref{theorem:asvd3} for more discussions).
We refer to the method in Theorem~\ref{theorem:asvd1} as \textit{ASVD-I}, which is known as SVD-LLM in the original paper \citep{wang2024svd}, and the method in Theorem~\ref{theorem:asvd2} as \textit{ASVD-II}.

\paragrapharrow{Nested low-rank approximation.}
We aware that one source of inaccuracy in activation-aware low-rank approximation methods for LLM compression arises from the compression loss incurred by using a calibration dataset to obtain the activation. When we use a dataset of activation $\bX_1$ to obtain the compressed weight, it may lose information relevant to other datasets, thereby introducing bias and performance degradation.
The nested decomposition approach addresses this issue through the following steps:
\begin{subequations}\label{equation:nesterd_lowapp}
\begin{align}
\widetildebW_1\widetildebZ_1 \equiv \widetildebA_1 &=  \mathop{\arg\min}_{\rank(\bB)=k_1}\;\normf{(\bA -\bB)\bX},
&\qquad& \rank(\widetildebW_1)=\rank(\widetildebZ_1) = k_1;
\label{equation:nesterd_lowapp1} \\
\widetildebW_2\widetildebZ_2 \equiv \widetildebA_2 &=  \mathop{\arg\min}_{\rank(\bB)=k_2}\;\normf{\bB -  (\bA - \widetildebA_1)},
&\qquad&  \rank(\widetildebW_2)=\rank(\widetildebZ_2) = k_2,
\label{equation:nesterd_lowapp2}
\end{align}
\end{subequations}
where \eqref{equation:nesterd_lowapp1} is applied to reduce the variance due to input activations (same as the ASVD approaches), and \eqref{equation:nesterd_lowapp2} aims to approximate the original weight matrix $\bA$ as closely as possible.
By setting $k_1+k_2 = k$, the same level of compression ratio as in \eqref{equation:awaretruc_loss} is achieved.
The activation when receiving a new input activation $\widetildebX$ is then given by 
\begin{equation}\label{equation:nest_out}
\bO=\widetildebA_1\widetildebX + \widetildebA_2\widetildebX
= \widetildebW_1(\widetildebZ_1\widetildebX) + \widetildebW_2(\widetildebZ_2\widetildebX)
.
\end{equation}
The complexity of \eqref{equation:nest_out} is $\mathcal{O}(2n(p+m)(k_1+k_2))$ floating-points operations (flops), which matches that of the ASVD case.

Although Theorems~\ref{theorem:asvd1} and \ref{theorem:asvd2} show that Step~\eqref{equation:nesterd_lowapp1} is upper bounded by the Frobenius norm of the truncating matrix from $\bA\bS$, where $\bS\bS^\top\equiv \bP\bLambda\bP^\top$ is either the Cholesky decomposition or  SVD of $\bX\bX^\top$, Step~\eqref{equation:nesterd_lowapp2} can be more  general and achieved via SVD, leading to the terms \textit{NSVD-I}  or \textit{NSVD-II} (following Theorem~\ref{theorem:asvd1} or Theorem~\ref{theorem:asvd2}, respectively, for Step~\eqref{equation:nesterd_lowapp1}). 
Alternatively, \eqref{equation:nesterd_lowapp2} can be performed  using low-rank interpolative decomposition (ID) \citep{martinsson2011randomized, lu2022bayesian}; hence the names \textit{NID-I} or \textit{NID-II}.

\paragrapharrow{Other failure trials.}

During the development of this work, we also experimented with other variants of the ASVD approach. Although these variants did not show better performance in our experiments, they may provide insights for future research.
\begin{theorem}[ASVD-III: Activation-Aware Low-Rank Approximation via Scaling Eigenvalues]\label{theorem:asvd3}
Consider the same setting as Theorem~\ref{theorem:asvd2}.
Let  $\bA\bP\cdot ({\gamma}\bI) =\sum_{i=1} \widehat{\sigma}_i\widehat{\bu}_i\widehat{\bv}_i^\top$ be the SVD of $\bA\bP \cdot (\gamma\bI)$, where $  \bP\bLambda\bP^\top$ is the SVD of $\bX\bX^\top$,   and  $\bX$ has full rank. Then, 
\begin{itemize}
\item[(a)]  Truncating  the $j$-th singular value of $\bA\bP\cdot (\gamma\bI)$ to obtain $\widetildebA\bP\cdot (\gamma\bI) = \sum_{i\neq j} \widehat{\sigma}_i\widehat{\bu}_i\widehat{\bv}_i^\top$, the compression loss $ \widehat{\ell}_j =\normfbig{(\bA -\widetildebA)\bX} $ equals  $ \widehat{\sigma}_j \trace(\frac{1}{\gamma^2}\bLambda \widehat{\bv}_i\widehat{\bv}_i^\top) $.
\item[(b)]   Truncating   the smallest singular values of $\bA\bP \cdot (\gamma\bI)$ to obtain $\widetildebA\bP \cdot (\gamma\bI) = \sum_{i=1}^k \widehat{\sigma}_i\widehat{\bu}_i\widehat{\bv}_i^\top$, the squared compression loss is $\widehat{\ell}^2= \sum_{i=k+1}^{r} \widehat{\sigma}_i^2 \trace(\frac{1}{\gamma^2}\bLambda \widehat{\bv}_i\widehat{\bv}_i^\top)$, where $r$ is the rank of $\bA\bP$.
\end{itemize}

\end{theorem}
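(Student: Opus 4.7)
The plan is to mimic the proof of Theorem~\ref{theorem:asvd2} almost verbatim, with the Cholesky/SVD factor $\bS$ replaced by $\bP\cdot(\gamma\bI)$, and then track the extra factor $\tfrac{1}{\gamma^2}\bLambda$ that appears because $\bP^\top$ alone is no longer a square root of $\bX\bX^\top$. The key algebraic identity driving everything is $\bP^\top \bX\bX^\top \bP = \bLambda$, which follows from $\bX\bX^\top = \bP\bLambda\bP^\top$ and the orthogonality of $\bP$.

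First I would invert the scaling: since $\bP$ is orthogonal and $\gamma\bI$ is a scalar multiple of the identity, writing $\bA\bP\cdot(\gamma\bI) = \sum_i \widehat{\sigma}_i \widehat{\bu}_i \widehat{\bv}_i^\top$ yields $\bA = \tfrac{1}{\gamma}\sum_i \widehat{\sigma}_i \widehat{\bu}_i \widehat{\bv}_i^\top \bP^\top$. If $\widetildebA$ is obtained by zeroing out the $j$-th singular term, then $\bA - \widetildebA = \tfrac{1}{\gamma}\widehat{\sigma}_j \widehat{\bu}_j \widehat{\bv}_j^\top \bP^\top$, so the right-multiplication by $\bX$ gives $\tfrac{1}{\gamma}\widehat{\sigma}_j \widehat{\bu}_j \widehat{\bv}_j^\top \bP^\top \bX$.

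For part (a), the plan is then to square the Frobenius norm via $\normf{\bM}^2 = \trace(\bM\bM^\top)$:
\begin{equation*}
\widehat{\ell}_j^2 = \tfrac{1}{\gamma^2}\widehat{\sigma}_j^2\,\trace\!\bigl(\widehat{\bu}_j \widehat{\bv}_j^\top \bP^\top \bX\bX^\top \bP \widehat{\bv}_j \widehat{\bu}_j^\top\bigr),
\end{equation*}
use $\bP^\top \bX\bX^\top \bP = \bLambda$ to collapse the inner product, cyclically permute the trace, and invoke $\widehat{\bu}_j^\top \widehat{\bu}_j = 1$ to land on $\widehat{\sigma}_j^2 \,\trace(\tfrac{1}{\gamma^2}\bLambda\,\widehat{\bv}_j \widehat{\bv}_j^\top)$, matching the stated form.

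For part (b), I would expand the double sum exactly as in the proof of Theorem~\ref{theorem:asvd1}(2): write $(\bA-\widetildebA)\bX = \tfrac{1}{\gamma}\sum_{i=k+1}^{r} \widehat{\sigma}_i \widehat{\bu}_i \widehat{\bv}_i^\top \bP^\top \bX$, expand $\normf{\cdot}^2$ as a double sum over $i,j$, and use the orthonormality of the left singular vectors $\widehat{\bu}_i^\top \widehat{\bu}_j = \delta_{ij}$ to kill the cross terms. What survives is $\sum_{i=k+1}^{r} \widehat{\sigma}_i^2\,\trace(\tfrac{1}{\gamma^2}\bLambda\,\widehat{\bv}_i \widehat{\bv}_i^\top)$, which is the claim. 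There is no real obstacle here; the only subtlety worth flagging is that, unlike in Theorem~\ref{theorem:asvd2} where the Cholesky/SVD factor absorbs the activation covariance so that each truncated singular value contributes its own square, here each term is additionally weighted by $\trace(\tfrac{1}{\gamma^2}\bLambda\,\widehat{\bv}_i\widehat{\bv}_i^\top)$, which depends on how the right singular vector $\widehat{\bv}_i$ aligns with the eigen-directions of $\bX\bX^\top$. This is precisely why discarding the smallest $\widehat{\sigma}_i$ is no longer guaranteed to minimize the activation-aware loss, explaining why this variant is listed among the failure trials.
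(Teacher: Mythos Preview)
Your proposal is correct and follows essentially the same route as the paper's proof: both write $(\bA-\widetildebA)\bX$ as $\tfrac{1}{\gamma}\widehat{\sigma}_j\widehat{\bu}_j\widehat{\bv}_j^\top\bP^\top\bX$ (or the corresponding sum in part~(b)), expand the squared Frobenius norm as a trace, substitute $\bP^\top\bX\bX^\top\bP=\bLambda$, and use orthonormality of the $\widehat{\bu}_i$ to eliminate cross terms. Your added commentary on why the $\trace(\tfrac{1}{\gamma^2}\bLambda\widehat{\bv}_i\widehat{\bv}_i^\top)$ weights spoil the optimality guarantee is a welcome clarification beyond what the paper's proof states.
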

\begin{proof}
\textbf{(a).}
It follows that 
$$
\scriptsize
\begin{aligned}
\widehat{\ell}_j 
&=\normf{(\bA\bP{\gamma}\bI -\widetildebA\bP{\gamma}\bI)\frac{\bP^\top}{\gamma}\bX}
=
\normf{\widehat{\sigma}_j \widehat{\bu}_j \widehat{\bv}_j ^\top\frac{\bP^\top}{\gamma}\bX}
=
\widehat{\sigma}_j  \trace\left(\widehat{\bu}_j \widehat{\bv}_j ^\top \frac{\bP^\top}{\gamma}\bX\bX^\top \frac{\bP}{\gamma}\widehat{\bv}_j \widehat{\bu}_j ^\top\right)^{1/2}
= \widehat{\sigma}_j  \trace\left(\frac{\bLambda}{\gamma^2} \widehat{\bv}_j \widehat{\bv}_j ^\top\right).
\end{aligned}
$$

\textbf{(b).}
If we truncate $ \widehat{\sigma}_{k+1}, \widehat{\sigma}_{k+2},  \ldots, \widehat{\sigma}_r $, where $r$ is the rank of $\bA\bP$, the square of the loss $ \widehat{\ell} $ is:
\begin{align*}
\widehat{\ell}^2 &=\normf{\sum_{i=k+1}^{r} \widehat{\sigma}_i \widehat{\bu}_i \widehat{\bv}_i^\top \frac{1}{\gamma}\bP^\top \bX}^2 = \sum_{j=k+1}^{r} \sum_{i=k+1}^{r} \widehat{\sigma}_i \widehat{\sigma}_j \trace(\widehat{\bu}_i \widehat{\bv}_i^\top \frac{1}{\gamma}\bP^\top \bX\bX^\top \bP \frac{1}{\gamma}\widehat{\bv}_j \widehat{\bu}_j^\top) \\
&= \sum_{i=k+1}^{r} \widehat{\sigma}_i^2 \trace(\widehat{\bu}_i \widehat{\bv}_i^\top\frac{1}{\gamma}\bP^\top \bX\bX^\top \bP\frac{1}{\gamma} \widehat{\bv}_i \widehat{\bu}_i^\top) = \sum_{i=k+1}^{r} \widehat{\ell}_i^2 = \sum_{i=k+1}^{r} \widehat{\sigma}_i^2 \trace(\frac{1}{\gamma^2}\bLambda \widehat{\bv}_i\widehat{\bv}_i^\top)^2.
\end{align*}
This completes the proof.
\end{proof}
Theorem~\ref{theorem:asvd3} demonstrates that if we choose $\gamma \triangleq \max(\bLambda^{1/2})$, then $\trace(\frac{1}{\gamma^2}\bLambda \widehat{\bv}_i\widehat{\bv}_i^\top)$ is upper bounded by one, and thus the squared loss is upper bounded by the sum of squared singular values. 
We refer to this method as \textit{ASVD-III}.
However, the singular values in Theorem~\ref{theorem:asvd2} come from $\bA\bP\bLambda^{1/2}$, while those in Theorem~\ref{theorem:asvd3} come from $\bA\bP\cdot (\gamma\bI)$. ASVD-III then benefits if the singular values of $\bA\bP\bLambda^{1/2}$ are more clustered, indicating a small hierarchical relationship among the singular values.
However, in our experiments,  we did not observe any improvement using ASVD-III; this is partly because the singular values in $\bA\bP\bLambda^{1/2}$ already exhibit a hierarchical relationship, which underscores why SVD-based low-rank approximation methods can perform well.

\section{Experiments}

We compare NSVD approaches against state-of-the-art baselines including  standard SVD, ASVD-0, and ASVD-I.
To demonstrate the versatility  of our method, following \citet{wang2024svd},  we evaluate the performance of NSVD alongside these baselines across \textcolor{black}{six} models from three distinct  LLM families at various scales: LLaMA-7B, 13B, \textcolor{black}{30B}, \citep{touvron2023llama}, OPT-6.7B \citep{zhang2022opt}, Vicuna-7B \citep{chiang2023vicuna} and Mistral-7B \citep{chaplot2023albert}; and \textcolor{black}{eight} language modeling datasets: WikiText-2, PTB, C4, SNIPS, AlpacaEval, MCTest, CMRC (CN), and AlpacaEval (JP) \citep{merity2016pointer, marcus1993building, raffel2020exploring, coucke2018snips, dubois2023alpacafarm, richardson2013mctest, cui2018span, junsashihara}. The first six datasets consist of \textbf{English} sentences, while CMRC (CN)  contains \textbf{Chinese} sentences, and AlpacaEval (JP)  includes texts in \textbf{Japanese}.

Following   \citet{yuan2023asvd} and \citet{wang2024svd}, we randomly select 256 samples from the training set of  WikiText-2 to serve as the calibration data.
In all scenarios, we use the test split of each dataset for evaluation purposes; if a test set is unavailable, we substitute it with the training set. It is anticipated that the CMRC (CN) and AlpacaEval (JP) datasets could yield markedly different activations due to their respective languages. Consequently, these two datasets are crucial for assessing the ``overfitting" potential during the post-training procedure when applying low-rank approximation methods to LLMs.

\newcommand{\toppp}{\multicolumn{1}{c|}{}}
\newcommand{\topaa}{\multicolumn{1}{c|}{\multirow{4}{*}{10\%}}}
\newcommand{\topbb}{\multicolumn{1}{c|}{\multirow{4}{*}{20\%}}}
\newcommand{\topcc}{\multicolumn{1}{c|}{\multirow{4}{*}{30\%}}}
\newcommand{\topdd}{\multicolumn{1}{c|}{\multirow{4}{*}{40\%}}}
\newcommand{\topee}{\multicolumn{1}{c|}{\multirow{4}{*}{50\%}}}
\newcommand{\nestdecom}{\colorbox{gray!10}{\text{NSVD-I}}}
\newcommand{\nestdecomii}{\colorbox{gray!10}{\text{NSVD-II}}}
\newcommand{\nidi}{\colorbox{gray!10}{\text{NID-I}}}
\newcommand{\asvdii}{\colorbox{gray!10}{\text{ASVD-II}}}
\newcommand\tabalign[1]{\multicolumn{1}{c|}{#1}}

\noindent
\begin{table}[h]
\centering
\resizebox{1.\textwidth}{!}{
\begin{tabular}{ccc|ccccccc|cc}
\midrule
\tabalign{\textsc{Ratio}}      & \tabalign{\textsc{Method}}    & WikiText-2 & PTB & C4   &   {SNIPS}   & {AlpacaEval}   & {MCTest} 
& {CMRC (CN)} & {AlpacaEval (JP)}   & Avg. Impro.
\\ \midrule
\tabalign{\textcolor{gray}{0\%}}  & \tabalign{\textcolor{gray}{Original}} & \textcolor{gray}{5.68}   &  \textcolor{gray}{8.35}     & \textcolor{gray}{7.34}   
& \textcolor{gray}{5.88}  & \textcolor{gray}{3.39}  & \textcolor{gray}{1.97}    & \textcolor{gray}{5.77}    & \textcolor{gray}{5.36}   & --
\\ \midrule
\topaa  & \tabalign{SVD}     & 2778.92 & 4011.84 & 4062.66 & 2135.73 & 2318.94 & 3163.11 & 37554.17 & 4894.34 & --    \\
\toppp  & \tabalign{ASVD-0}  & 26.95 & 41.91 & 37.65 & 18.82 & 11.59 & 3.41 & 85.55 & 123.64 & --  \\
\toppp  & \tabalign{ASVD-I}  & 7.07  & 12.42 & 12.04 & 7.84 & 4.81 & 2.21  & 14.05 & 15.93& -- \\
\cmidrule{2-10} 
\toppp  & \tabalign{\asvdii}  & 7.07  & 12.41 & 12.03 & 7.84 & 4.81 & 2.21 & 14.05 & 15.93 & --  \\
\toppp  & \tabalign{\nestdecom} & {7.08} (\textcolor{mylightbluetext}{$\uparrow$0.1\%})  & \textbf{12.03} (\textcolor{mylightbluetext}{$\downarrow$3.1\%})& \textbf{11.58} (\textcolor{mylightbluetext}{$\downarrow$3.8\%})    & \textbf{7.74}  (\textcolor{mylightbluetext}{$\downarrow$1.3\%}) 
& \textbf{4.66}  (\textcolor{mylightbluetext}{$\downarrow$3.1\%})  & \textbf{2.19}  (\textcolor{mylightbluetext}{$\downarrow$0.9\%}) &
\textbf{12.54}  (\textcolor{mylightbluetext}{$\downarrow$10.7\%})  & \textbf{14.89}  (\textcolor{mylightbluetext}{$\downarrow$6.5\%}) & 4.2\% \\
\toppp  & \tabalign{\nestdecomii} & {7.08} (\textcolor{mylightbluetext}{$\uparrow$0.1\%})  & \textbf{12.03} (\textcolor{mylightbluetext}{$\downarrow$3.1\%})& \textbf{11.58} (\textcolor{mylightbluetext}{$\downarrow$3.8\%})    & \textbf{7.74}  (\textcolor{mylightbluetext}{$\downarrow$1.3\%}) 
& \textbf{4.67}  (\textcolor{mylightbluetext}{$\downarrow$2.9\%})  & \textbf{2.19}  (\textcolor{mylightbluetext}{$\downarrow$0.9\%}) &
\textbf{12.54}  (\textcolor{mylightbluetext}{$\downarrow$10.7\%})  & \textbf{14.89}  (\textcolor{mylightbluetext}{$\downarrow$6.5\%}) & 4.2\%
\\ \midrule
\topbb  & \tabalign{SVD}    & 20083.00  & 20436.32  & 18827.16 & 39417.77 & 12539.73 & 18853.27 & 127786.81 & 123109.21 & --     \\
\toppp  & \tabalign{ASVD-0} & 91.85  & 133.18  & 129.77 & 54.27 & 50.18 & 8.66  & 721.09 & 908.54 & --      \\
\toppp  & \tabalign{ASVD-I} & 7.89   & 16.54  & 15.93  & 10.50 & 6.14 & 2.44 & 48.09 & 38.31 & --\\
\cmidrule{2-10}   
\toppp  & \tabalign{\asvdii} & 7.89   & 16.49  & 15.95  & 10.51 & 6.15 & 2.44 & 48.13 & 38.32 & --\\
\toppp  & \tabalign{\nestdecom} & {7.92} (\textcolor{mylightbluetext}{$\uparrow$0.4\%})  & \textbf{15.44} (\textcolor{mylightbluetext}{$\downarrow$6.7\%})& \textbf{15.23} (\textcolor{mylightbluetext}{$\downarrow$4.4\%})   & \textbf{9.93}  (\textcolor{mylightbluetext}{$\downarrow$5.4\%})
& \textbf{5.85}  (\textcolor{mylightbluetext}{$\downarrow$4.7\%})   & \textbf{2.40}  (\textcolor{mylightbluetext}{$\downarrow$1.6\%}) 
&  \textbf{34.49}  (\textcolor{mylightbluetext}{$\downarrow$28.3\%}) & \textbf{34.51}  (\textcolor{mylightbluetext}{$\downarrow$9.9\%})& 8.7\%\\
\toppp  & \tabalign{\nestdecomii} & {7.93} (\textcolor{mylightbluetext}{$\uparrow$0.5\%})  & \textbf{15.41} (\textcolor{mylightbluetext}{$\downarrow$6.8\%})& \textbf{15.27} (\textcolor{mylightbluetext}{$\downarrow$4.1\%})     & \textbf{9.95}  (\textcolor{mylightbluetext}{$\downarrow$5.2\%})
&\textbf{5.86} (\textcolor{mylightbluetext}{$\downarrow$4.6\%})     & \textbf{2.40}  (\textcolor{mylightbluetext}{$\downarrow$1.6\%}) 
& \textbf{34.51}  (\textcolor{mylightbluetext}{$\downarrow$28.2\%}) & \textbf{34.51}  (\textcolor{mylightbluetext}{$\downarrow$9.9\%}) & 8.6\%
\\ \midrule
\topcc  & \tabalign{SVD}    & 13183.07  & 17321.18  & 21055.10 & 6899.56 & 16912.55 & 19886.18 & 203777.87 & 160611.69 & --   \\
\toppp  & \tabalign{ASVD-0} & 228.77 & 326.20 & 312.18 & 115.65 & 157.33 & 42.37 & 1406.88 & 1359.26 & --           \\
\toppp  & \tabalign{ASVD-I} & 9.51   & 27.11  & 25.81   & 17.89 & 9.45 & 3.06  & 688.23 & 813.23 & --\\
\cmidrule{2-10}   
\toppp  & \tabalign{\asvdii} & 9.51   & 27.15  & 25.81   & 17.95 & 9.45 & 3.06 & 689.93 & 814.98 & --\\
\toppp  & \tabalign{\nestdecom} & {9.64} (\textcolor{mylightbluetext}{$\uparrow$1.4\%})  & \textbf{25.19} (\textcolor{mylightbluetext}{$\downarrow$7.1\%})& \textbf{24.41} (\textcolor{mylightbluetext}{$\downarrow$5.4\%})    
& \textbf{15.74} (\textcolor{mylightbluetext}{$\downarrow$12.1\%})    
& \textbf{8.85} (\textcolor{mylightbluetext}{$\downarrow$6.3\%})    & \textbf{3.02} (\textcolor{mylightbluetext}{$\downarrow$1.3\%})  
& \textbf{577.63} (\textcolor{mylightbluetext}{$\downarrow$16.1\%}) & \textbf{367.69} (\textcolor{mylightbluetext}{$\downarrow$54.8\%})& 14.7\% \\
\toppp  & \tabalign{\nestdecomii} & {9.65} (\textcolor{mylightbluetext}{$\uparrow$1.5\%})  & \textbf{25.31} (\textcolor{mylightbluetext}{$\downarrow$6.6\%})
& \textbf{24.51} (\textcolor{mylightbluetext}{$\downarrow$5.0\%})   & \textbf{15.80} (\textcolor{mylightbluetext}{$\downarrow$11.7\%})    
& \textbf{8.88} (\textcolor{mylightbluetext}{$\downarrow$6.0\%})    & \textbf{3.02} (\textcolor{mylightbluetext}{$\downarrow$1.3\%})  
& \textbf{579.14} (\textcolor{mylightbluetext}{$\downarrow$15.9\%})  & \textbf{373.64} (\textcolor{mylightbluetext}{$\downarrow$54.1\%})  & 14.4\%
\\ \midrule
\topdd  & \tabalign{SVD}    & 52415.35  & 59743.13  & 47712.12 & 48902.68 & 34563.55 & 50406.63 & 138049.05 & 159592.52 & --   \\
\toppp  & \tabalign{ASVD-0} & 681.86 & 1255.66   & 829.90 & 351.98 & 434.39 & 294.58  & 9518.03 & 50113.53 & --       \\
\toppp  & \tabalign{ASVD-I} & 13.74  & 62.82  & 55.55   & 38.96 & 19.72 & 4.78   & 20629.84 & 35974.28 & -- \\ 
\cmidrule{2-10}  
\toppp  & \tabalign{\asvdii} & 13.74  & 62.88  & 55.47   & 38.84 & 19.83 & 4.80  & 20816.34 & 35994.52& --  \\  
\toppp  & \tabalign{\nestdecom} & {14.24} (\textcolor{mylightbluetext}{$\uparrow$3.6\%})  & \textbf{61.16} (\textcolor{mylightbluetext}{$\downarrow$2.6\%})
& \textbf{54.22} (\textcolor{mylightbluetext}{$\downarrow$2.4\%})  & \textbf{35.57} (\textcolor{mylightbluetext}{$\downarrow$8.7\%})
& \textbf{18.88} (\textcolor{mylightbluetext}{$\downarrow$4.3\%})  & \textbf{4.75} (\textcolor{mylightbluetext}{$\downarrow$0.6\%}) 
& \textbf{11899.94} (\textcolor{mylightbluetext}{$\downarrow$42.3\%}) & \textbf{11729.03} (\textcolor{mylightbluetext}{$\downarrow$67.4\%}) & 18.3\% \\
\toppp  & \tabalign{\nestdecomii} & {14.29} (\textcolor{mylightbluetext}{$\uparrow$4.0\%})  & \textbf{61.57} (\textcolor{mylightbluetext}{$\downarrow$2.0\%})& \textbf{54.18} (\textcolor{mylightbluetext}{$\downarrow$2.5\%})   
& \textbf{35.37} (\textcolor{mylightbluetext}{$\downarrow$9.2\%})
& \textbf{18.99} (\textcolor{mylightbluetext}{$\downarrow$3.7\%}) & \textbf{4.76} (\textcolor{mylightbluetext}{$\downarrow$0.4\%}) 
& \textbf{11917.10} (\textcolor{mylightbluetext}{$\downarrow$42.2\%}) & \textbf{11886.94} (\textcolor{mylightbluetext}{$\downarrow$67.0\%}) & 18.1\%
\\ \midrule
\topee  & \tabalign{SVD}    & 131872.88  & 86942.90  & 79661.03 & 52970.02 & 62037.18 & 100978.91 & 296638.73 & 607005.28 & --   \\
\toppp  & \tabalign{ASVD-0} & 1266.12 & 2448.33 & 1373.45 & 956.99 & 889.06 & 1029.50  & 142270.06 & 301097.83 & --     \\ 
\toppp  & \tabalign{ASVD-I} & 26.61   & 182.81 & 151.83   & 92.57 & 65.07 & 13.81 & 87327.64 & 210253.49  & --\\
\cmidrule{2-10}   
\toppp  & \tabalign{\asvdii} & 26.78   & 183.15 & 152.01   & 93.08 & 65.04 & 13.71 & 87538.71 & 210129.23 & --\\
\toppp  & \tabalign{\nestdecom} &{28.53} (\textcolor{mylightbluetext}{$\uparrow$7.2\%})  & \textbf{180.56} (\textcolor{mylightbluetext}{$\downarrow$1.2\%})& \textbf{146.62} (\textcolor{mylightbluetext}{$\downarrow$3.4\%}) & \textbf{89.91} (\textcolor{mylightbluetext}{$\downarrow$2.9\%}) 
& \textbf{62.81} (\textcolor{mylightbluetext}{$\downarrow$3.4\%}) & \textbf{13.11} (\textcolor{mylightbluetext}{$\downarrow$5.1\%}) 
& \textbf{60518.94} (\textcolor{mylightbluetext}{$\downarrow$30.7\%}) & \textbf{110191.48} (\textcolor{mylightbluetext}{$\downarrow$47.6\%})  & 13.5\% \\
\toppp  & \tabalign{\nestdecomii} &{28.73} (\textcolor{mylightbluetext}{$\uparrow$8.0\%})  & \textbf{180.60} (\textcolor{mylightbluetext}{$\downarrow$1.2\%})& \textbf{147.80} (\textcolor{mylightbluetext}{$\downarrow$2.7\%}) & \textbf{90.64} (\textcolor{mylightbluetext}{$\downarrow$2.1\%}) 
& \textbf{62.35} (\textcolor{mylightbluetext}{$\downarrow$4.2\%}) & \textbf{13.01} (\textcolor{mylightbluetext}{$\downarrow$5.8\%}) 
& \textbf{61064.03} (\textcolor{mylightbluetext}{$\downarrow$30.1\%}) & \textbf{110091.03} (\textcolor{mylightbluetext}{$\downarrow$47.6\%}) & 13.4\%
\\ \midrule
\end{tabular}
}
\captionof{table}{Zero-shot performance of LLaMA-7B compressed using NSVD-I and various baselines evaluated under compression ratios ranging from 10\% to 50\%, across \textcolor{black}{eight} language modeling datasets (measured by perplexity  (\textcolor{mylightbluetext}{$\downarrow$})). The best performance on each dataset is highlighted in bold. The relative performance gain compared to the best-performing baseline (here, ASVD-I) is indicated in blue text within bracket.
In all scenarios, we apply  NSVD-I or NSVD-II with $k_1=0.95\cdot k$, $k_2=k-k_1$, where $k$ is the low-rank parameter used in  ASVD approaches.
}
\label{tab:nsce_ppl1}
\end{table}

\subsection{Performance under different compression ratios} 
First, we evaluate the performance of LLaMA-7B compressed using NSVD (here, $k_1=0.95k$) and  baselines under compression ratios ranging from 10\% to 50\% across  all \textcolor{black}{eight} datasets. 
Our results include comparisons with ASVD-II, NSVD-I, and NSVD-II; since no improvements were observed using the proposed ASVD-III method, its results are omitted for brevity. Table~\ref{tab:nsce_ppl1} summarizes these findings. We observe that, as stated by Theorems~\ref{theorem:asvd1} and \ref{theorem:asvd2}, ASVD-I and ASVD-II yield equivalent performance when numerical errors are ignored. Similarly, NSVD-I and NSVD-II also produce comparable outcomes.

\begin{table}[H]
\resizebox{1.\textwidth}{!}{
\begin{tabular}{cccccccccc}
\midrule
\tabalign{\textsc{Similarity}}    & WikiText-2 & PTB & C4   &   {SNIPS}   & {AlpacaEval}   & {MCTest} 
& {CMRC (CN)} & {AlpacaEval (JP)}   & \\ \midrule
\tabalign{\textsc{Mean (std)}}    & \textcolor{black}{0.94 (0.04)} & \textcolor{black}{0.59 (0.17)} & \textcolor{black}{0.75 (0.19)}   &  \textcolor{black}{0.52 (0.23)}  & \textcolor{black}{0.72 (0.19)}   & \textcolor{black}{0.58 (0.22)}
& \textcolor{mylightbluetext}{0.46 (0.19)} & \textcolor{mylightbluetext}{0.45 (0.2)}  
\\ \midrule
\end{tabular}
}
\captionof{table}{The mean similarities and the standard deviations between the activations from the calibration set and those from other evaluation sets under LLaMA-7B. 
The calibration set shows an average similarity of less than 0.5 with both  CMRC (CN) or AlpacaEval (JP), while it has an average similarity of 0.94 with the test split of WikiText-2.
}
\label{tab:nsce_ppl0}
\end{table}
\begin{figure}[H]
\centering    
\vspace{-0.35cm} 
\subfigtopskip=2pt 
\subfigbottomskip=2pt 
\subfigcapskip=-5pt 
\subfigure[WikiText-2.]{\label{fig:exp1}%
\includegraphics[width=0.24\linewidth]{./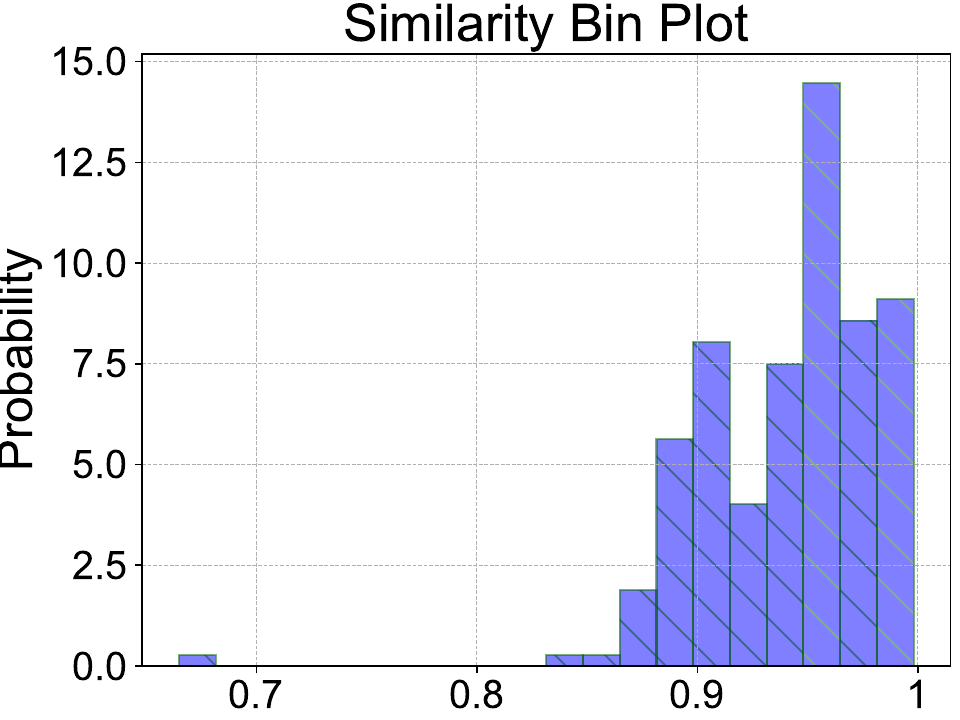}}%
\subfigure[PTB.]{\label{fig:exp2}%
\includegraphics[width=0.24\linewidth]{./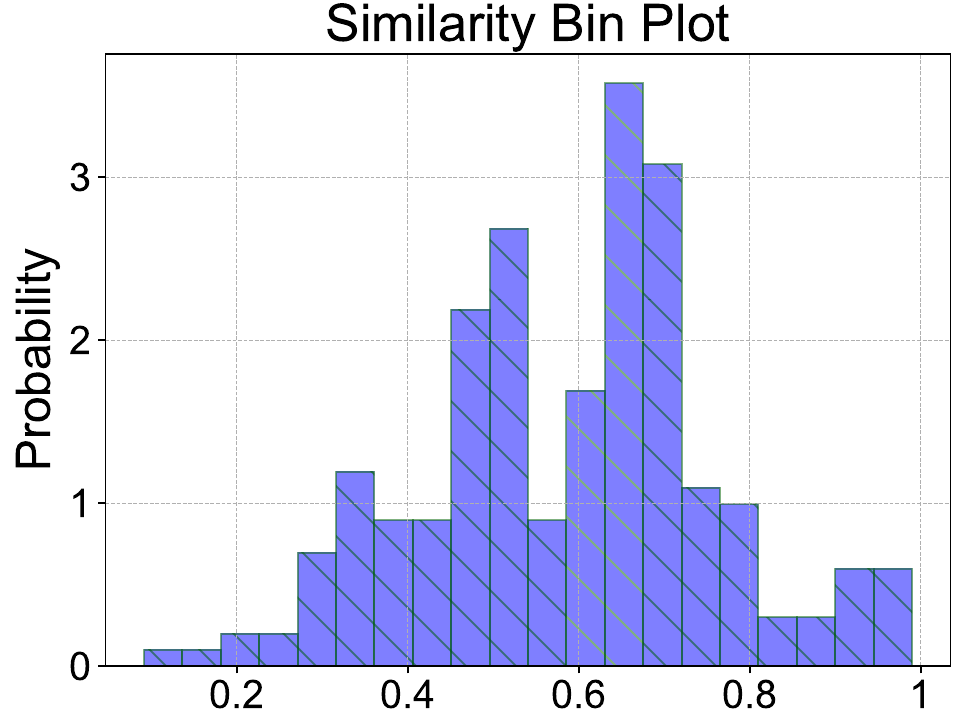}}%
\subfigure[C4.]{\label{fig:exp3}%
\includegraphics[width=0.24\linewidth]{./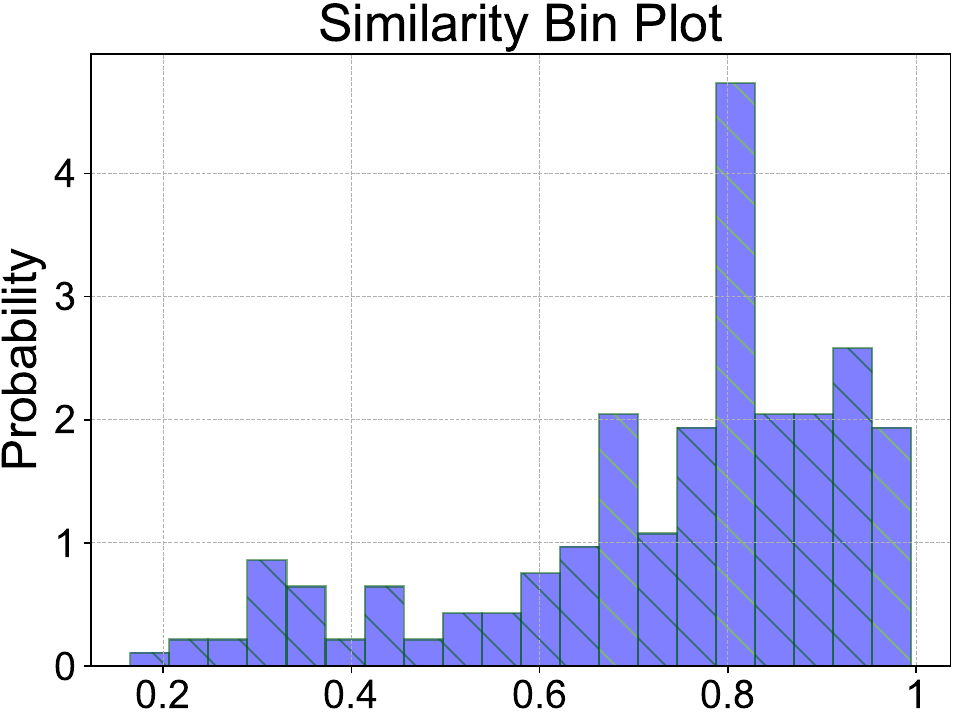}}
\subfigure[SNIPS.]{\label{fig:exp4}%
\includegraphics[width=0.24\linewidth]{./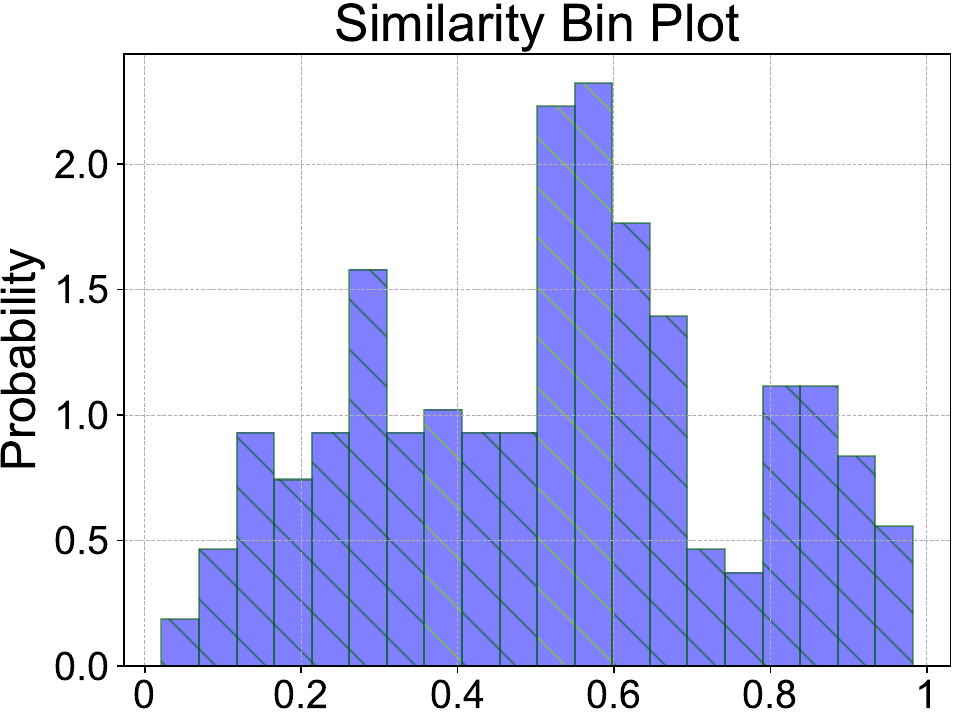}}\\
\subfigure[AlpacaEval.]{\label{fig:exp5}%
\includegraphics[width=0.24\linewidth]{./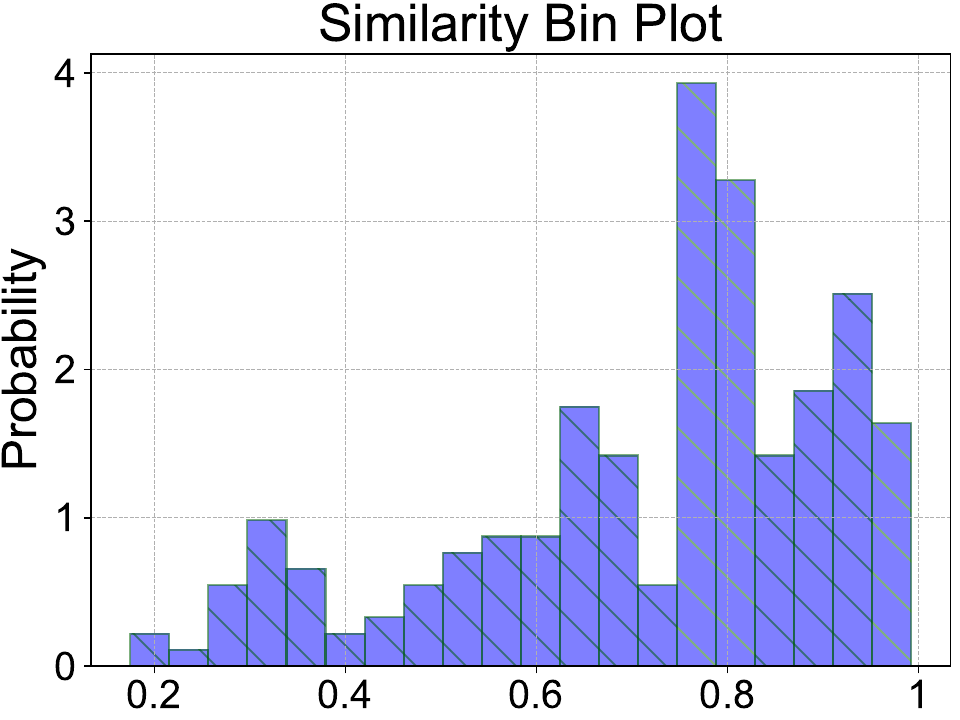}}%
\subfigure[MCTest.]{\label{fig:exp6}%
\includegraphics[width=0.24\linewidth]{./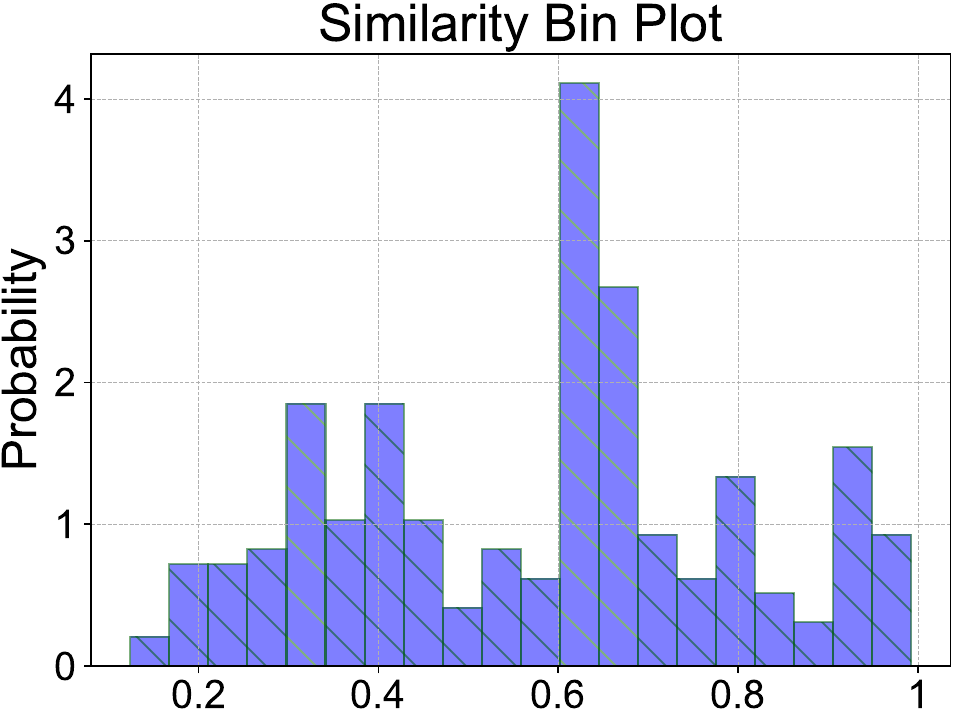}}
\subfigure[CMRC (CN).]{\label{fig:exp7}%
\includegraphics[width=0.24\linewidth]{./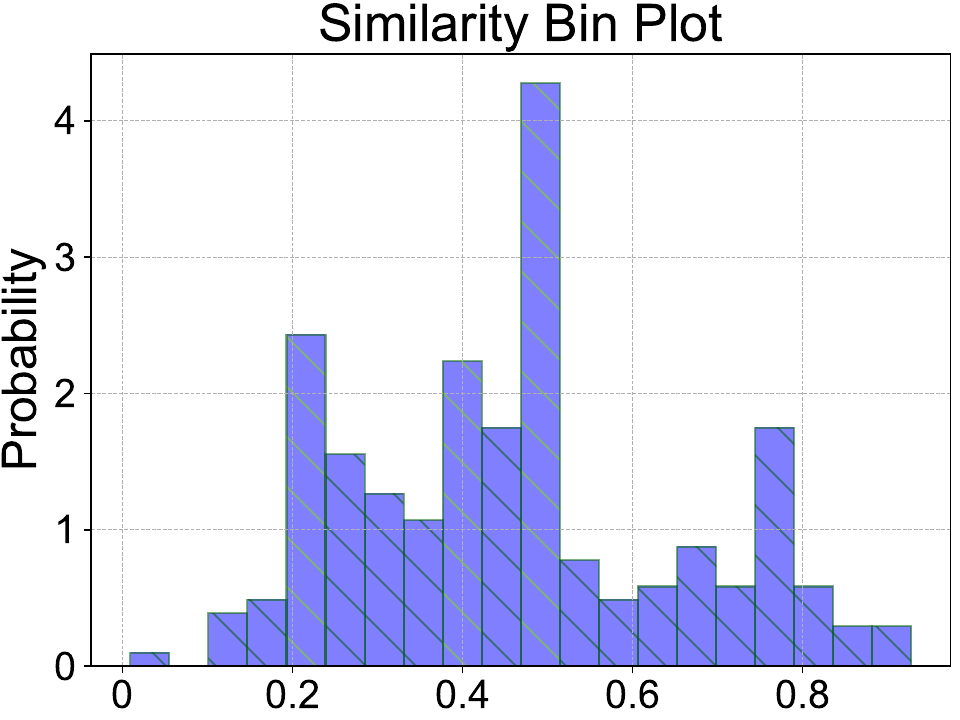}}%
\subfigure[AlpacaEval (JP).]{\label{fig:exp8}%
\includegraphics[width=0.24\linewidth]{./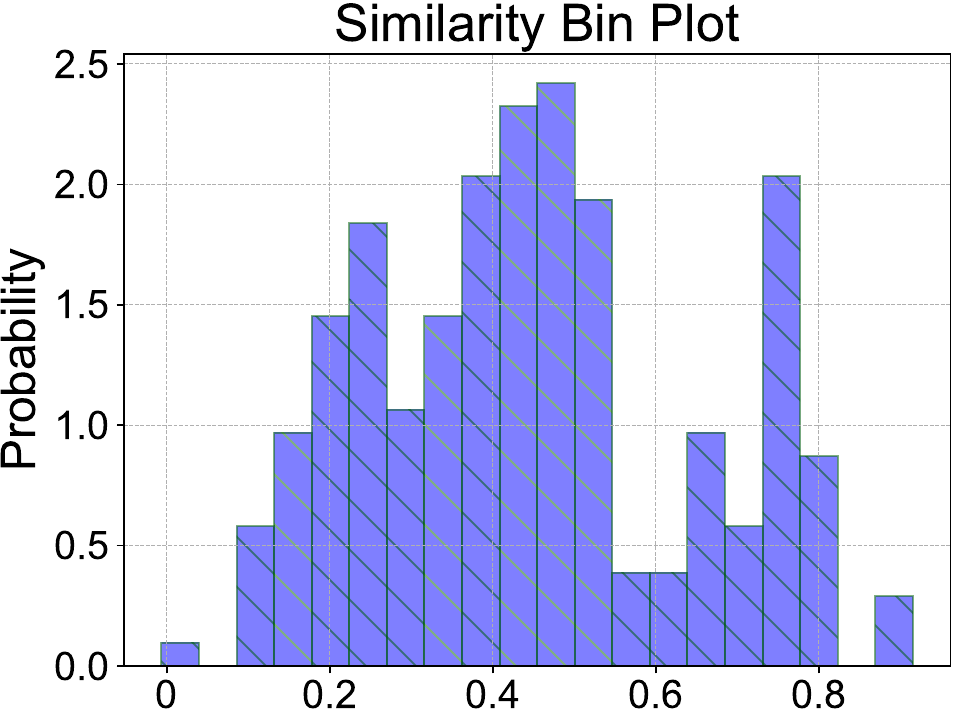}}%
\caption{The cosine similarity between the activations from the calibration dataset (the training set of WikiText-2) and those from various  evaluation datasets. Table~\ref{tab:nsce_ppl0}  presents the mean of similarities and the standard deviations. Specifically, the calibration set  shows an average similarity of less than 0.5 with both CMRC (CN) and AlpacaEval (JP).}
\label{fig:simi_dist}
\end{figure}

NSVD-I or NSVD-II consistently outperform standard SVD, ASVD-0, and ASVD-I across all the compression ratios. 
More importantly, NSVD exhibits significant advantages over  baselines under medium to high compression ratios. Specifically, at a 30\% compression ratio, compared to the best-performing baseline, NSVD-I reduces  perplexity on PTB, C4, SNIPS, AlpacaEval, MCTest, CMRC (CN), and AlpacaEval (JP) by 7.1\%, 5.4\%, 12.1\%, 6.3\%, 1.3\%, 16.1\%, and 54.8\%, respectively; 
when the compression ratio reaches 40\%, NSVD can reduce the perplexity by more than 60\%. 

Since the calibration set is derived from the training set of WikiText-2, NSVD does not improve performance on the test set of WikiText-2, which is expected since ASVD-I achieves a sub-optimal solution due to the similar activations using the the training and test sets of WikiText-2 (see Figure~\ref{fig:simi_dist} and Table~\ref{tab:nsce_ppl0}).
Therefore, we only provide the average improvement (Avg. Impro.) in Table~\ref{tab:nsce_ppl1} for the remaining \textcolor{black}{seven} evaluation sets.
On average, NSVD-I reduces  perplexity by 14.7\%, 18.3\%, and 13.5\% under 30\%, 40\%, and 50\% compression ratios, respectively.
These findings suggest that NSVD-I is highly effective for compressing LLMs, making them suitable for resource-constrained devices such as smartphones and embedded systems.

\textbf{Robustness.}
Figure~\ref{fig:simi_dist} illustrates the distribution of (cosine)  similarities for activations between the calibration set and each evaluation set under LLaMA-7B; while Table~\ref{tab:nsce_ppl0} presents the mean  similarities and their standard deviations. 
Notably, the activation under the test set of WikiText-2 has a similarity score of 0.94, indicating an extremely close match.
In contrast, the calibration set shows an average similarity of less than 0.5 with CMRC (CN) or AlpacaEval (JP); the proposed NSVD framework significantly outperforms baselines on  CMRC (CN) and AlpacaEval (JP).
Under a 40\% compression ratio, NSVD-I reduces  perplexity on CMRC (CN) and AlpacaEval (JP) by 42.3\% and 67.4\%, respectively.
These results, on the other hand, highlight  that NSVD is more robust in compressing LLMs for performing  out-of-sample tasks, e.g.,  the \textit{multilingual} settings or machine translation large language models.
Alternatively, if  LLMs are calibrated using a medical dataset, NSVD exhibits greater robustness when performing tasks like math competitions, demonstrating its effectiveness in multitask settings.

\subsection{Performance under different values of $k_1$} 
Given the flexibility of the NSVD framework in terms of the $k_1$ and $k_2$ parameters---so long as their sum equals that used in other low-rank approximation methods---we evaluate the performance of LLaMA-7B compressed using NSVD-I with varying values of $k_1$ ($k_1\in\{0.99k, 0.95k, 0.90k, 0.85k, 0.80k\}$) and compare it to ASVD-I under a  30\% compression ratio.
Table~\ref{tab:nsce_ppl2} summarizes the results for NSVD-I with different $k_1$ values. 
In most scenarios, the performance of NSVD-I initially  improves  and  then deteriorates as the value of $k_1$ decreases.
For both the CMRC (CN) and AlpacaEval (JP) datasets, the best improvement is observed when $k_1=0.80k$ for these experiments.
This suggests that for datasets with potentially very different activations, selecting a smaller $k_1$ is more effective.
A moderate approach for choosing $k_1$ would be within the range of $k_1=0.90k\sim 0.95k$.

\begin{table}[H]
\centering
\centering
\resizebox{1.\textwidth}{!}{
\begin{tabular}{ccc|ccccccc|cc}
\midrule
\tabalign{\textsc{$k_1$}}      & \tabalign{\textsc{Method}}    & WikiText-2\textcolor{mylightbluetext}{$\downarrow$} & PTB\textcolor{mylightbluetext}{$\downarrow$} & C4\textcolor{mylightbluetext}{$\downarrow$}   &   {SNIPS\textcolor{mylightbluetext}{$\downarrow$}}   & {AlpacaEval\textcolor{mylightbluetext}{$\downarrow$}}   & {MCTest\textcolor{mylightbluetext}{$\downarrow$}}  & {CMRC (CN)} & {AlpacaEval (JP)}   & Avg. Impro. 
\\ 
\midrule
\tabalign{}  & \tabalign{ASVD-I} & 9.51   & 27.11  & 25.81   & 17.89 & 9.45 & 3.06  & 688.23 & 813.23 & --\\
\cmidrule{1-11}   
\tabalign{$k_1=0.99k$}  
& \tabalign{\nestdecom } 
& {9.52} (\textcolor{mylightbluetext}{$\uparrow$0.1\%})     & {25.86} (\textcolor{mylightbluetext}{$\downarrow$4.6\%})
& {24.97} (\textcolor{mylightbluetext}{$\downarrow$3.3\%})  & {16.45} (\textcolor{mylightbluetext}{$\downarrow$8.0\%})    
& {9.13} (\textcolor{mylightbluetext}{$\downarrow$3.4\%})   & {3.03} (\textcolor{mylightbluetext}{$\downarrow$1.0\%}) 
& {605.32} (\textcolor{mylightbluetext}{$\downarrow$12.0\%}) & {574.60 } (\textcolor{mylightbluetext}{$\downarrow$32.7\%}) & 9.3\%\\
\tabalign{$k_1=0.95k$}  & \tabalign{\nestdecom } 
& {9.64} (\textcolor{mylightbluetext}{$\uparrow$1.4\%})       & \textbf{25.19} (\textcolor{mylightbluetext}{$\downarrow$7.1\%})
& {24.41} (\textcolor{mylightbluetext}{$\downarrow$5.4\%})    & {15.74} (\textcolor{mylightbluetext}{$\downarrow$12.1\%})    
& {8.85} (\textcolor{mylightbluetext}{$\downarrow$6.3\%})     & {3.02} (\textcolor{mylightbluetext}{$\downarrow$1.3\%}) 
& {577.63 } (\textcolor{mylightbluetext}{$\downarrow$16.1\%}) & {367.79} (\textcolor{mylightbluetext}{$\downarrow$54.8\%}) & 14.7\%\\
\tabalign{$k_1=0.90k$}  
& \tabalign{\nestdecom } 
& {9.77} (\textcolor{mylightbluetext}{$\uparrow$2.7\%})        & {25.40} (\textcolor{mylightbluetext}{$\downarrow$6.3\%})
& {24.03} (\textcolor{mylightbluetext}{$\downarrow$6.9\%})     & \textbf{14.68} (\textcolor{mylightbluetext}{$\downarrow$17.9\%})    
& {8.68} (\textcolor{mylightbluetext}{$\downarrow$8.1\%})      & {3.00} (\textcolor{mylightbluetext}{$\downarrow$2.0\%}) 
& {412.65} (\textcolor{mylightbluetext}{$\downarrow$40.0\%})  & {335.52} (\textcolor{mylightbluetext}{$\downarrow$58.7\%}) & 20.0\%\\
\tabalign{$k_1=0.85k$}  
& \tabalign{\nestdecom } 
& {9.93} (\textcolor{mylightbluetext}{$\uparrow$4.4\%})               & {25.78} (\textcolor{mylightbluetext}{$\downarrow$4.9\%})
& \textbf{23.98} (\textcolor{mylightbluetext}{$\downarrow$7.1\%})     & {14.74} (\textcolor{mylightbluetext}{$\downarrow$17.6\%})    
& {8.62} (\textcolor{mylightbluetext}{$\downarrow$8.8\%})             & \textbf{2.98} (\textcolor{mylightbluetext}{$\downarrow$2.6\%}) 
& {325.48} (\textcolor{mylightbluetext}{$\downarrow$52.7\%})     & {233.93} (\textcolor{mylightbluetext}{$\downarrow$71.2\%}) & 23.6\%\\
\tabalign{$k_1=0.80k$}  
& \tabalign{\nestdecom } 
& {10.09} (\textcolor{mylightbluetext}{$\uparrow$6.1\%})              & {26.19} (\textcolor{mylightbluetext}{$\downarrow$3.4\%})
& {24.09} (\textcolor{mylightbluetext}{$\downarrow$6.7\%})            & {15.78} (\textcolor{mylightbluetext}{$\downarrow$11.8\%})    
& \textbf{8.59} (\textcolor{mylightbluetext}{$\downarrow$9.1\%})      & \textbf{2.98} (\textcolor{mylightbluetext}{$\downarrow$2.6\%}) 
& \textbf{250.17} (\textcolor{mylightbluetext}{$\downarrow$63.7\%}) & \textbf{204.90} (\textcolor{mylightbluetext}{$\downarrow$74.8\%}) & 24.6\%\\
\midrule
\end{tabular}
}
\captionof{table}{Zero-shot performance of LLaMA-7B compressed using NSVD-I and ASVD-I under a 30\% compression ratio with different values of $k_1$, across \textcolor{black}{eight} language modeling datasets (measured by perplexity  (\textcolor{mylightbluetext}{$\downarrow$})). The best performance is highlighted in bold. The relative performance gain compared to ASVD-I is indicated in blue text with bracket.
In all scenarios, we apply  NSVD-I with $k_1+k_2\equiv k$, where $k$ is the low-rank parameter used in  ASVD approaches.
}
\label{tab:nsce_ppl2}
\end{table}

\subsection{Performance under different values of $k_1$ for NID} 
We previously mentioned that the second step in the nested decomposition approach shown in \eqref{equation:nesterd_lowapp} is flexible.
To explore this flexibility, we compare the results obtained using low-rank interpolative decomposition \citep{martinsson2011randomized, lu2022bayesian}.
Table~\ref{tab:nsce_ppl3} summarizes the findings using NID-I under a 30\% compression ratio.
For PTB, C4, SNIPS, AlpacaEval, and MCTest datasets, containing English sentences (same as the calibration set of WikiText-2), we observe that NID-I can improve the performance even with a small value of $k_2$ ($k_2=0.01k$ and $k_1=0.99k$). This suggests that the nested decomposition approach can be effectively applied with a small value of $k_2$ when a more economical low-rank approximation method is desired (here, the ID approach rather than the SVD method).
However, when evaluating the  CMRC (CN) set, which contains Chinese sentences and is expected to have significantly different activations from the calibration set, the NID-I approach may not perform as well. In such cases, NSVD is necessary to achieve better results.
On average, NID-I with $k_1=0.95k$ reduces  perplexity  by 0.9\% under a 30\% compression ratio.

\begin{table}[H]
\centering
\resizebox{1.\textwidth}{!}{
\begin{tabular}{ccc|ccccccc|cc}
\midrule
\tabalign{\textsc{$k_1$}}      & \tabalign{\textsc{Method}}    & WikiText-2\textcolor{mylightbluetext}{$\downarrow$} & PTB\textcolor{mylightbluetext}{$\downarrow$} & C4\textcolor{mylightbluetext}{$\downarrow$}   &   {SNIPS\textcolor{mylightbluetext}{$\downarrow$}}   & {AlpacaEval\textcolor{mylightbluetext}{$\downarrow$}}   & {MCTest\textcolor{mylightbluetext}{$\downarrow$}}   & {CMRC (CN)} & {AlpacaEval (JP)}   & Avg. Impro.
\\ 
\midrule  
\tabalign{}  & \tabalign{ASVD-I} & 9.51   & 27.11  & 25.81   & 17.89 & 9.45 & 3.06 & \textbf{688.23} &  813.23& -- \\
\midrule 
\tabalign{$k_1=0.99k$}  
& \tabalign{\nidi} 
& {9.54} (\textcolor{mylightbluetext}{$\uparrow$0.3\%})            & \textbf{26.18} (\textcolor{mylightbluetext}{$\downarrow$3.4\%})
& \textbf{25.40} (\textcolor{mylightbluetext}{$\downarrow$1.6\%})  & {17.77} (\textcolor{mylightbluetext}{$\downarrow$0.7\%})    
& \textbf{9.32} (\textcolor{mylightbluetext}{$\downarrow$1.2\%})   & {3.01} (\textcolor{mylightbluetext}{$\downarrow$1.6\%}) 
& {852.63} (\textcolor{mylightbluetext}{$\uparrow$24.9\%}) & {950.00} (\textcolor{mylightbluetext}{$\uparrow$16.8\%}) & $-4.7$\%
\\
\tabalign{$k_1=0.95k$}  & \tabalign{\nidi} 
& {9.84} (\textcolor{mylightbluetext}{$\uparrow$3.5\%})       & {26.59} (\textcolor{mylightbluetext}{$\downarrow$1.9\%})
& {25.98} (\textcolor{mylightbluetext}{$\uparrow$0.6\%})      & \textbf{17.54} (\textcolor{mylightbluetext}{$\downarrow$2.0\%})    
& {9.52} (\textcolor{mylightbluetext}{$\uparrow$0.7\%})       & \textbf{3.00} (\textcolor{mylightbluetext}{$\downarrow$2.0\%}) 
& {798.42} (\textcolor{mylightbluetext}{$\uparrow$16.0\%})    & \textbf{670.99} (\textcolor{mylightbluetext}{$\downarrow$17.5\%}) & 0.9\%
\\
\tabalign{$k_1=0.90k$}  
& \tabalign{\nidi} 
& {10.31} (\textcolor{mylightbluetext}{$\uparrow$8.4\%})      & {28.71} (\textcolor{mylightbluetext}{$\uparrow$5.9\%})
& {27.89} (\textcolor{mylightbluetext}{$\uparrow$8.1\%})      & {19.39} (\textcolor{mylightbluetext}{$\uparrow$8.4\%})    
& {10.20} (\textcolor{mylightbluetext}{$\uparrow$7.9\%})      & {3.10} (\textcolor{mylightbluetext}{$\uparrow$1.3\%})
& {838.61} (\textcolor{mylightbluetext}{$\uparrow$21.9\%})    & {686.28} (\textcolor{mylightbluetext}{$\downarrow$15.6\%}) & $-5.4$\%
\\  \midrule
\end{tabular}
}
\captionof{table}{Zero-shot performance of LLaMA-7B compressed using NID-I and ASVD-I under a 30\% compression ratio with different values of $k_1$, across \textcolor{black}{eight} language modeling datasets (measured by perplexity  (\textcolor{mylightbluetext}{$\downarrow$})). The best performance is highlighted in bold. The relative performance gain compared to ASVD-I is indicated in blue text with bracket.
In all scenarios, we apply NSVD-I with $k_1+k_2\equiv k$, where $k$ is the low-rank parameter used in  ASVD approaches.
}
\label{tab:nsce_ppl3}
\end{table}

\subsection{Performance on different LLMs} 
To evaluate the effectiveness of NSVD across various LLMs, we compare its performance against several baseline methods using three models from different LLM families---OPT-6.7B (OPT family), Vicuna-7B (LLaMA family), and Mistral-7B (Mistral family)---under a 30\% compression ratio on WikiText-2 (the calibration dataset), PTB, C4, SNIPS, Alpaca,  MCTest, CMRC (CN), and  AlpacaEval (JP) datasets. As shown in Table~\ref{tab:nsce_ppl4_diffmodels}, 
ASVD-I generally performs better than ASVD-0, except when using Vicuna-7B on the PTB and AlpacaEval (JP) datasets, where it performs worse.
NSVD-I, on the other hand, consistently outperforms  ASVD-0 and ASVD-I on all three LLMs except one instance (with a minor degradation), and exhibits more stable performance across different LLMs.
Specifically, NSVD-I achieves average performance improvements of 27.6\%, 4.4\%, and 30.1\% for the Vicuna-7B, Mistral-7B, and OPT-6.7B models, respectively.

\begin{table}[H]
\centering
\resizebox{1.\textwidth}{!}{
\begin{tabular}{ccc|ccccccc|cc}
\midrule
\tabalign{\textsc{Model}}      & \tabalign{\textsc{Method}}    & WikiText-2\textcolor{mylightbluetext}{$\downarrow$} & PTB\textcolor{mylightbluetext}{$\downarrow$} & C4\textcolor{mylightbluetext}{$\downarrow$}   &   {SNIPS\textcolor{mylightbluetext}{$\downarrow$}}   & {AlpacaEval\textcolor{mylightbluetext}{$\downarrow$}}   & {MCTest\textcolor{mylightbluetext}{$\downarrow$}}    & {CMRC (CN)} & {AlpacaEval (JP)}   & Avg. Impro.
\\ \midrule 
\tabalign{}   & \tabalign{ASVD-0} & 283.29   & 2861.49  & 345.30   & 142.17 & 122.25 & 20.38  & 1018.54 & 534.31 & --\\
\tabalign{Vicuna-7B}   & \tabalign{ASVD-I} & 14.23   & 3022.79  & 42.16   & 26.05 & 11.21 & 3.63  & 912.86 & 565.20 & --\\
\cmidrule{2-11} 
\tabalign{}  
& \tabalign{\nestdecom } 
& {14.36} (\textcolor{mylightbluetext}{$\uparrow$0.9\%})            & \textbf{1376.72} (\textcolor{mylightbluetext}{$\downarrow$51.9\%})
& \textbf{38.53} (\textcolor{mylightbluetext}{$\downarrow$8.6\%})   & \textbf{24.32} (\textcolor{mylightbluetext}{$\downarrow$6.6\%})    
& \textbf{10.12} (\textcolor{mylightbluetext}{$\downarrow$9.7\%})   & \textbf{3.50} (\textcolor{mylightbluetext}{$\downarrow$3.6\%}) 
& \textbf{405.33} (\textcolor{mylightbluetext}{$\downarrow$55.6\%}) & \textbf{253.98} (\textcolor{mylightbluetext}{$\downarrow$55.1\%}) & 27.6\%
\\
\midrule
\tabalign{}   & \tabalign{ASVD-0} & 320.73   & 1177.23  & 479.52   & 155.27 & 183.24 & 20.15  & 1679.11 & 884.30 & --\\
\tabalign{Mistral-7B}   & \tabalign{ASVD-I} & 48.48   & 135.40  & 58.28   & 18.12 & 21.29 & 4.15  & 18.63 & \textbf{14.93} & -- \\
\cmidrule{2-11} 
\tabalign{}  
& \tabalign{\nestdecom } 
& {42.33} (\textcolor{mylightbluetext}{$\downarrow$12.7\%})            & \textbf{133.68} (\textcolor{mylightbluetext}{$\downarrow$1.3\%})
& \textbf{53.20} (\textcolor{mylightbluetext}{$\downarrow$8.7\%})   & \textbf{17.37} (\textcolor{mylightbluetext}{$\downarrow$4.1\%})    
& \textbf{19.39} (\textcolor{mylightbluetext}{$\downarrow$8.9\%})   & \textbf{3.85} (\textcolor{mylightbluetext}{$\downarrow$7.2\%}) 
& \textbf{18.29} (\textcolor{mylightbluetext}{$\downarrow$1.8\%})   & {15.10} (\textcolor{mylightbluetext}{$\uparrow$1.2\%}) & 4.4\% 
\\
\midrule
\tabalign{}   & \tabalign{ASVD-0} & 98.99   & 109.48  & 88.69   & 42.15 & 28.58 & 5.48  & 27.04 & 130.86 & --\\
\tabalign{OPT-6.7B}   & \tabalign{ASVD-I} & 27.66   & 37.25  & 40.06   & 26.20 & 14.75 & 3.47  & 14.03 & 32.35 & --\\
\cmidrule{2-11} 
\tabalign{}  
& \tabalign{\nestdecom } 
& {19.55} (\textcolor{mylightbluetext}{$\downarrow$29.3\%})            & \textbf{25.05} (\textcolor{mylightbluetext}{$\downarrow$32.8\%})
& \textbf{26.13} (\textcolor{mylightbluetext}{$\downarrow$34.8\%})   & \textbf{16.28} (\textcolor{mylightbluetext}{$\downarrow$37.9\%})    
& \textbf{9.73} (\textcolor{mylightbluetext}{$\downarrow$34.0\%})   & \textbf{2.85} (\textcolor{mylightbluetext}{$\downarrow$17.9\%}) 
& \textbf{11.69} (\textcolor{mylightbluetext}{$\downarrow$16.7\%}) & \textbf{20.50} (\textcolor{mylightbluetext}{$\downarrow$36.6\%}) & 30.1\%
\\
\midrule
\end{tabular}
}
\captionof{table}{Zero-shot performance of Vicuna-7B, Mistral-7B, and OPT-6.7B compressed by NSVD-I and baselines under a 30\% compression ratio, across \textcolor{black}{eight} language modeling datasets (measured by perplexity  (\textcolor{mylightbluetext}{$\downarrow$})). The best performance is highlighted in bold. The relative performance gain compared to the best-performing baseline (here, ASVD-0 or ASVD-I) is indicated in blue text with bracket.
In all scenarios, we apply NSVD-I with $k_1=0.95\cdot k$, $k_2=k-k_1$, where $k$ is the low-rank parameter used in  ASVD approaches.
}
\label{tab:nsce_ppl4_diffmodels}
\end{table}

\subsection{Performance on different scales of LLMs}
Finally, to further investigate  the generality of NSVD across different scales of LLMs, we compare the performance of NSVD with the baselines on \textcolor{black}{three} different models from the LLaMA family---LLaMA-7B, LLaMA-13B, and LLaMA-30B---under a 30\% compression ratio on WikiText-2 (the calibration dataset), PTB, C4, SNIPS, Alpaca,  MCTest, CMRC (CN), and AlpacaEval (JP) datasets. As shown in Table~\ref{tab:nsce_ppl_diffscale},  NSVD  consistently outperforms  ASVD-0 and ASVD-I across all  \textcolor{black}{three} LLMs of varying scales. On average, NSVD-I improves  performance by 14.7\%, 13.4\%, and 3.1\% for LLaMA-7B, LLaMA-13B, and LLaMA-30B models, respectively.

\begin{table}[H]
\centering
\resizebox{1.\textwidth}{!}{
\begin{tabular}{ccc|ccccccc|cc}
\midrule
\tabalign{\textsc{Model}}      & \tabalign{\textsc{Method}}    & WikiText-2\textcolor{mylightbluetext}{$\downarrow$} & PTB\textcolor{mylightbluetext}{$\downarrow$} & C4\textcolor{mylightbluetext}{$\downarrow$}   &   {SNIPS\textcolor{mylightbluetext}{$\downarrow$}}   & {AlpacaEval\textcolor{mylightbluetext}{$\downarrow$}}   & {MCTest\textcolor{mylightbluetext}{$\downarrow$}}    & {CMRC (CN)} & {AlpacaEval (JP)}   & Avg. Impro.
\\ \midrule 
\toppp  & \tabalign{ASVD-0} & 228.77 & 326.20 & 312.18 & 115.65 & 157.33 & 42.37  & 1406.88 & 1359.26& --         \\
\tabalign{LLaMA-7B}  & \tabalign{ASVD-I} & 9.51   & 27.11  & 25.81   & 17.89 & 9.45 & 3.06 & 688.23 & 813.23& --\\
\cmidrule{2-11}   
\toppp  & \tabalign{\nestdecom} & {9.64} (\textcolor{mylightbluetext}{$\uparrow$1.4\%})  & \textbf{25.19} (\textcolor{mylightbluetext}{$\downarrow$7.1\%})& \textbf{24.41} (\textcolor{mylightbluetext}{$\downarrow$5.4\%})    
& \textbf{15.74} (\textcolor{mylightbluetext}{$\downarrow$12.1\%})    
& \textbf{8.85} (\textcolor{mylightbluetext}{$\downarrow$6.3\%})    & \textbf{3.02} (\textcolor{mylightbluetext}{$\downarrow$1.3\%})  
& \textbf{577.63} (\textcolor{mylightbluetext}{$\downarrow$16.1\%}) & \textbf{367.79} (\textcolor{mylightbluetext}{$\downarrow$54.8\%}) & 14.7\%
\\
\midrule
\tabalign{}   & \tabalign{ASVD-0} & 90.21   & 117.07  & 111.96    & 57.47 & 41.06 & 10.02 & 282.47 & 278.47 & --\\
\tabalign{LLaMA-13B}   & \tabalign{ASVD-I} & 7.63   & 17.84  & 18.87   & 14.58 & 7.22 & 2.59  & 154.13 & 82.42 & --\\
\cmidrule{2-11} 
\tabalign{}  
& \tabalign{\nestdecom } 
& {7.69} (\textcolor{mylightbluetext}{$\uparrow$0.8\%})             & \textbf{17.10} (\textcolor{mylightbluetext}{$\downarrow$4.1\%})
& \textbf{17.30} (\textcolor{mylightbluetext}{$\downarrow$8.3\%})   & \textbf{12.11} (\textcolor{mylightbluetext}{$\downarrow$16.9\%})    
& \textbf{6.66} (\textcolor{mylightbluetext}{$\downarrow$7.8\%})   & \textbf{2.53} (\textcolor{mylightbluetext}{$\downarrow$2.3\%}) 
& \textbf{102.39} (\textcolor{mylightbluetext}{$\downarrow$33.6\%})  & \textbf{65.00} (\textcolor{mylightbluetext}{$\downarrow$21.1\%}) & 13.4\%
\\
\midrule
\tabalign{}   & \tabalign{ASVD-0} & 27.76   & 35.59  & 35.90    & 18.28 & 13.11 & 3.40 & 51.71 & 77.66 & --\\
\tabalign{LLaMA-30B}   & \tabalign{ASVD-I} & 6.35   & 11.41 & 13.26    & 8.24 & 5.21 &  2.25  & 15.81 & 17.59 & --\\
\cmidrule{2-11} 
\tabalign{}  
& \tabalign{\nestdecom } 
& {6.38 } (\textcolor{mylightbluetext}{$\uparrow$0.5\%})             & \textbf{11.05 } (\textcolor{mylightbluetext}{$\downarrow$3.2\%})
& \textbf{12.55} (\textcolor{mylightbluetext}{$\downarrow$5.4\%})   & \textbf{7.95} (\textcolor{mylightbluetext}{$\downarrow$3.5\%})    
& \textbf{5.00} (\textcolor{mylightbluetext}{$\downarrow$4.0\%})   & \textbf{2.22} (\textcolor{mylightbluetext}{$\downarrow$1.3\%}) 
& \textbf{15.37} (\textcolor{mylightbluetext}{$\downarrow$2.8\%})  & \textbf{17.29} (\textcolor{mylightbluetext}{$\downarrow$1.7\%}) & 3.1\%
\\
\midrule
\end{tabular}
}
\captionof{table}{Zero-shot performance of LLaMA-7B, LLaMA-13B, and LLaMA-30B compressed using NSVD-I and baselines under a 30\% compression ratio, across \textcolor{black}{eight} language modeling datasets (measured by perplexity  (\textcolor{mylightbluetext}{$\downarrow$})). The best performance is highlighted in bold. The relative performance gain compared to the best-performing baseline (here, ASVD-0 or ASVD-I) is indicated in blue text with bracket.
In all scenarios, we apply NSVD-I with $k_1=0.95\cdot k$, $k_2=k-k_1$, where $k$ is the low-rank parameter used in  ASVD approaches.
}
\label{tab:nsce_ppl_diffscale}
\end{table}

\section{Conclusion}
In this paper, we have presented a method for improving low-rank compression in LLMs  with nested matrix decomposition. Our approach leverages the same amount of compression ratio as  activation-aware SVD approaches, making it both cost-effective and practical for LLM compression. 
We have demonstrated the effectiveness of our method through extensive experiments using different datasets and different large language models from three distinct LLM families and three different scales.
Our findings indicate that ASVD-II performs comparably to ASVD-I in all scenarios with minor numerical benefits from pseudo-inverses. 
However, the NSVD approaches, which use nested matrix decomposition and maintain the same computational complexity, provide superior low-rank approximation. This results in improved overall performance, lower perplexity scores, and more robust and stable performance across different datasets.

\setcitestyle{numbers}
\bibliography{bib}
\bibliographystyle{iclr}

\end{document}